\documentclass[authoryear,preprint,review,12pt]{elsarticle}

\usepackage{geometry}
\usepackage{amssymb}
\usepackage{amsmath}
\usepackage{tabularx}
\usepackage{tabu}
\usepackage{amsfonts, amsthm}
\usepackage[hidelinks]{hyperref}
\usepackage{algorithm}
\usepackage{algpseudocode}

\usepackage{textcomp}
\usepackage{stfloats}
\usepackage{graphicx}

\usepackage{subcaption}
\usepackage{url}
\usepackage{verbatim}
\usepackage{multicol}
\usepackage{multirow} % For multirow functionality
\usepackage{array}    % For fixed column widths (optional)

\usepackage{balance}

\usepackage[T1]{fontenc}
\usepackage[utf8]{inputenc}

\usepackage{booktabs} % For better table formatting
\usepackage[table]{xcolor}
\definecolor{darkred}{RGB}{139,0,0}

\usepackage{threeparttable}
\usepackage{comment} % comment out blocks
\newtheoremstyle{italictheorem} % Style name
{3pt} % Space above
{3pt} % Space below
{\itshape} % Italic body font
{} % Indent amount (empty = no indent, \parindent = paragraph indent)
{\bfseries} % Theorem head font
{.} % Punctuation after theorem head
{ } % Space after theorem head
{} % Theorem head spec
\theoremstyle{italictheorem}
\newtheorem{definition}{Definition}
\newtheorem{lemma}{Lemma}

\newtheorem{theorem}{Theorem}

\newtheorem{proposition}{Proposition}

\DeclareSymbolFont{boldoperators}{OT1}{cmr}{bx}{n}
\SetSymbolFont{boldoperators}{bold}{OT1}{cmr}{bx}{n}
\edef\bar{\unexpanded{\protect\mathaccentV{bar}}\number\symboldoperators16}

\usepackage{setspace}
\usepackage{natbib}
\setcitestyle{numbers,square}
\journal{Pattern Recogintion}

\newcolumntype{Y}{>{\centering\arraybackslash}X}

\begin{document}

\begin{frontmatter}

%% Title, authors and addresses

%% use the tnoteref command within \title for footnotes;
%% use the tnotetext command for theassociated footnote;
%% use the fnref command within \author or \affiliation for footnotes;
%% use the fntext command for theassociated footnote;
%% use the corref command within \author for corresponding author footnotes;
%% use the cortext command for theassociated footnote;
%% use the ead command for the email address,
%% and the form \ead[url] for the home page:
%% \title{Title\tnoteref{label1}}
%% \tnotetext[label1]{}
%% \author{Name\corref{cor1}\fnref{label2}}
%% \ead{email address}
%% \ead[url]{home page}
%% \fntext[label2]{}
%% \cortext[cor1]{}
%% \affiliation{organization={},
%%             addressline={},
%%             city={},
%%             postcode={},
%%             state={},
%%             country={}}
%% \fntext[label3]{}

\title{Random Feature Pruning for Multi-Class Classification}

%% use optional labels to link authors explicitly to addresses:
%% \author[label1,label2]{}
%% \affiliation[label1]{organization={},
%%             addressline={},
%%             city={},
%%             postcode={},
%%             state={},
%%             country={}}
%%
%% \affiliation[label2]{organization={},
%%             addressline={},
%%             city={},
%%             postcode={},
%%             state={},
%%             country={}}

\author[a]{Mohammad Amin Omidi} %% Author name
\ead{mamin.omidi@shahed.ac.ir}
\author[a]{Babak Seyfe\corref{cor1}} 
\ead{seyfe@shahed.ac.ir}
\author[b]{Shahrokh Valaee} 
\ead{valaee@ece.utoronto.ca}
%% Author affiliation
\affiliation[a]{organization={Information Theoretic Learning Systems Laboratory (ITLSL), Department of Electrical Engineering,  Shahed University},%Department and Organization
            %addressline={}, 
            city={Tehran},
            %postcode={}, 
            %state={Tehran},
            country={Iran}}
\affiliation[b]{organization={Department of Electrical and Computer Engineering, University of Toronto},%Department and Organization
	%addressline={}, 
	city={Toronto},
	%postcode={}, 
	%state={},
	country={Canada}}

\cortext[cor1]{Corresponding author} % Optional: Marks the corresponding author

%% Abstract
\begin{abstract}
Weighted sums of random kitchen sinks (W-RKS) is a commonly used feature extraction technique for supervised learning tasks. A major drawback of this method is the redundancy and ineffectiveness of many extracted random features. This paper investigates efficient methods for reducing such redundancy in multi-class classification problems. We specifically formulate feature selection as a variant of the sequentially nested model selection problem and introduce the Successive Empirical Excess Risk Difference Maximization (SEED-MAX) algorithm. It recursively computes empirical risks to identify the most parsimonious yet informative set of features. We derive an analytical lower bound for the probability of correct feature selection by the SEED-MAX algorithm. This lower bound on the probability of correct feature selection can be made arbitrarily close to one by increasing the training dataset size. 
To further enhance the rate of pruning, we evaluate the features retained by SEED-MAX using the Fisher test (F-Test). Empirical evaluations on UCR time-series classification datasets show that the sequential application of SEED-MAX with the Fisher test achieves approximately 92 percent rate of pruning with negligible degradation in classification accuracy. Implementations of the SEED-MAX and SEED-MAX-F-Test methods are available at \url{https://github.com/OmidiAmin/seed-max-ftest}.
\end{abstract}

%%Graphical abstract
%\begin{graphicalabstract}
%\includegraphics{figs/Graph_Abs.png}
%\includegraphics[width=\textwidth]{figs/Graph_Abs.pdf}
%\end{graphicalabstract}

%%Research highlights
%\begin{highlights}
%\item A novel formulation of random features pruning as  nested model selection.  
%\item We Rank the features according to the trained weights to limit the search space.
%\item We Propose a new feature selection method called SEED-MAX.
%\item We establish a lower bound on probability of correct feature selection by SEED-MAX.

%\item For further pruning the redundant features, we apply Fisher test to SEED-MAX.
%\item SEED-MAX achieve the highest rate of pruning compared to state-of-the-arts methods.
%\end{highlights}

%% Keywords
\begin{keyword}
%% keywords here, in the form: keyword \sep keyword

%% PACS codes here, in the form: \PACS code \sep code

%% MSC codes here, in the form: \MSC code \sep code
%% or \MSC[2008] code \sep code (2000 is the default)
Weighted sums of random kitchen sinks (W-RKS)\sep time-series classification\sep supervised feature selection\sep sequentially nested model selection\sep Fisher test
\end{keyword}

\end{frontmatter}

%% Add \usepackage{lineno} before \begin{document} and uncomment 
%% following line to enable line numbers
%% \linenumbers

%% main text
%%

%% Use \section commands to start a section
\section{Introduction}
\label{sec:intro}
%% Labels are used to cross-reference an item using \ref command.
{A} trending feature extraction technique for supervised learning tasks is \emph{weighted sums of random kitchen sinks }(W-RKS) \cite{rahimi2008weighted}, which is a generalization to \emph{random Fourier features }(RFF) \cite{rahimi2007random} and a successor to \emph{extreme learning machine} (ELM) \cite{huang2015trends,Lin2015extreme,nair2026approximate}. This method has led to the creation of many practical algorithms \cite{li2019towards,li2021towards,dempster2020rocket,dempster2021minirocket,tan2021multirocket,dempster2023hydra,bier2022variable,schlegel2022hdc,hao2025hit}. 

Instead of extracting features through an optimization process, one can generate a large number of features by random sampling (from an appropriately specified distribution). Then, the regression weights for these randomly generated features are estimated from the training data, and the weighted sum of these random features is used for the label prediction. By deploying a sufficiently large number of random samples, it has been demonstrated both theoretically \cite{rahimi2008weighted,rahimi2007random,li2019towards,li2021towards,rudi2017generalization,bach2017equivalence,alaoui2015fast,chen2022concentration,wang2023overparameterized,liao2018spectrum,yu2025regularized,wang2026Generalization,nguyen2026random,kumar2026scalable} and empirically \cite{rahimi2008weighted,rahimi2007random,dempster2020rocket,dempster2021minirocket,tan2021multirocket} that the effectiveness of feature extraction using random sampling is comparable to that achieved through training with a particular infinite-dimensional feature map within a \emph{reproducing kernel Hilbert space} (RKHS). Moreover, since randomization is computationally cheaper than optimization, W-RKS has low computational complexity in its training phase \cite{rahimi2008weighted}. As a result, learning with W-RKS is a kind of approximation to learning with kernel methods \cite{liu2021random}. 

%Although W-RKS uses random sampling to simplify the learning problem, there is a main drawback regarding this type of learning. 
%In general, because the sampling distribution is irrelevant to the underlying distribution of the observations, there is a possibility that a large portion of the generated random features is redundant for the label prediction. This issue has been the focus of some recent studies.
Although W-RKS simplifies learning via random sampling, its sampling distribution is decoupled from the data distribution, which can lead to many redundant random features that do not contribute to label prediction. This limitation has been highlighted in recent studies \cite{li2021towards,salehinejad2022s,erocket,maximin,keshavarzian2024hiervar,chen2024pocket,Uribarri2024DetachROCKET}.

A way to reduce the redundancy of random sampling is to employ importance sampling \cite{li2019towards,li2021towards}. However, a key limitation of this approach is that the importance sampling distribution is computed solely from the input samples in the training data, without leveraging the relationship between inputs and labels.

%In \cite{li2019towards} and \cite{li2021towards}, the authors have studied a kind of importance sampling based on the leverage score to reduce the minimum required number of random features. It guarantees a fast learning rate based on the order of the excess risk upper bound. However, there is a primary issue regarding their proposed approach. The computation of the leverage score is based merely on the input samples in the training data, and the leverage score does not use the input-label relationship in its importance sampling distribution.

One can implement a supervised feature selection method to address the above issue, which retains only the features that significantly contribute to label prediction
\cite{erocket, maximin,keshavarzian2024hiervar,chen2024pocket,Uribarri2024DetachROCKET}.

In this regard, in \cite{erocket} a fast feature selection method is introduced, termed \emph{enhanced ROCKET }(E-ROCKET), to reduce the computational complexity of learning with \emph{random convolutional kernel transform }(ROCKET) \cite{dempster2020rocket,dempster2021minirocket} features while empirically preserving the probability of correct time-series classification. More recently, Keshavarzian and Valaee \cite{keshavarzian2024hiervar} applied Fisher test \cite{fisher1921studies, fisher1970statistical} as a post-processing step to E-ROCKET, achieving approximately a threefold improvement in the rate of pruning. They referred to their method as \emph{hierarchical variable selection }(HIERVAR) \cite{keshavarzian2024hiervar}.  
Other supervised feature selection methods for pruning random features have also been proposed in \cite{maximin,chen2024pocket,Uribarri2024DetachROCKET}, which rely on heuristic sparse optimization techniques to remove redundant features. Among these, \emph{pruning ROCKET} (POCKET) \cite{chen2024pocket} is an efficient implementation of a widely used sparse learning optimization algorithm known as the \emph{alternating direction method of multipliers} (ADMM), originally introduced in \cite{boyd2011distributed} and later applied to deep neural network compression in \cite{lee2022dnn}. Detach-ROCKET \cite{Uribarri2024DetachROCKET} is another state-of-the-art method based on the \emph{backward elimination} technique \cite{Hastie2009elements}, which is a simple and widely adopted sparse optimization approach.
%Experimental studies of the above-mentioned feature selection methods on the publicly available UCR time-series classification database \cite{dau2019ucr} indicate that HIERVAR \cite{keshavarzian2024hiervar} achieves the best performance in terms of both the rate of pruning and the probability of correct classification. 

All the aforementioned methods have empirically investigated feature selection to reduce the redundancy of random features, yet they lack a learning guarantee. Additionally, a critical question remains unanswered in the literature: To what extent can random features be pruned in a multi-class classification task, particularly when the number of classes is large, while still ensuring accurate label prediction?

We have addressed the above question in the following way: First, we have formulated feature selection as a problem of selecting a model from a sequence of nested models. Subsequently, we have proposed a method for estimating the most parsimonious model that utilizes the smallest number of features while achieving the minimum error. Our feature selection method approximates the most parsimonious model by identifying the change point in the trend of empirical risk minima for the nested sequence of models. We demonstrate, both empirically and theoretically, that the proposed method retains the subset of informative features with high probability.

In summary, the main contributions of this paper are as follows:

\begin{itemize} 
	\item We formulate feature selection as a special case of model selection over a sequence of nested models. By ranking the randomly generated features according to the norms of the trained regression weight vectors, the feature selection process is significantly simplified, and the search space is reduced to a discrete interval whose length is equal to the total number of random features.
	
	\item We propose a new consistent feature selection method (see \eqref{eq:change point detection criterion} and Algorithm \ref{alg:cpd}) for W-RKS, called \emph{successive empirical excess risk difference maximization }(SEED-MAX), that estimates the smallest possible number of features (over sorted features) needed to guarantee non-significant degradation in the prediction performance of the labels. We establish a lower bound on the probability of correctly estimating the minimum number of random features required to achieve the global minimum risk over sequentially nested models (Theorem \ref{thm:pac lower bound}). 
	\item Empirical feature sorting based on trained regression weights can retain redundant collinear features. To address this, we introduce a post-processing step in SEED-MAX that employs the Fisher test. The resulting combined method, SEED-MAX-F-Test, effectively reduces feature sorting errors and yields a more refined feature set.
	
	\item Our experimental results on real data (Table \ref{tab:ucr-main}, Fig. \ref{fig:ucr perf plane all data}, Fig. \ref{fig:ucr perf plane specific}, and Fig. \ref{fig:cd-diag}) and synthetic data (Fig. 6 on page 6 of the Supplementary Materials) demonstrate that, in scenarios with a large number of classes, our proposed methods achieve the highest pruning rates among state-of-the-art approaches, without significant degradation in the probability of correct classification. Moreover, through a synthetic experiment (see Fig. \ref{fig:sdmax analyitic lower}), we compare the lower bound in \eqref{eq:pac cpd lower bound} from Theorem \ref{thm:pac lower bound} with the exact probability of correctly detecting the true underlying pruned model. This figure shows that the derived lower bound closely tracks the exact probability of identifying the true underlying pruned model and exhibits very low sensitivity to variations in the training dataset size.

\end{itemize}

The remainder of this paper is organized as follows: Section \ref{sec: rks} provides a concise overview of the W-RKS method and its primary challenges. Section \ref{sec: ftr-slctn nstd-mdl-slctn} formulates the feature selection problem as a special case of sequentially nested model selection. In Section \ref{sec:seedmax}, we introduce SEED-MAX, our consistent and practical feature selection method. Section \ref{sec:seedmax analysis} presents the theoretical analysis of the SEED-MAX algorithm. In Section \ref{sec:Further pruning via Fisher Test}, we introduce the use of the Fisher test as a post-processing step for the SEED-MAX algorithm, resulting in a variant referred to as the SEED-MAX-F-Test. Sections \ref{sec:exp results} and \ref{sec:conclusion} are dedicated to empirical studies and conclusions, respectively. 
A detailed proof of our main analytical result concerning  SEED-MAX (Theorem \ref{thm:pac lower bound}) is provided in \ref{sec:main-thm-proof}.
A detailed explanation of multi-class classification using random sampling, along with additional evaluations of SEED-MAX on a synthetic dataset, are provided in Supplementary Materials.
%The detailed proofs of the analytical results, along with additional experiments on a simulated dataset, are provided in the appendices.

\subsection{Notations}
We use bold italic lowercase letters to represent vectors and bold italic uppercase letters for matrices. For a real matrix \(\boldsymbol{W} \in \mathbb{R}^{J \times K}\), where \(J\) and \(K\) are natural numbers, \(\boldsymbol{w}_{k}\) denote the vector corresponding to the \(k\)-th column of the matrix $\boldsymbol{W}$, respectively. The symbol \([K]\) represents the set \(\{1,\dots,K\}\), where \(K\) is a natural number.  
Calligraphic letters (e.g., \(\mathcal{F}\), \(\mathcal{G}\), and $\mathcal{J}$) are used to represent the sets or families of functions. 
The symbol $\lVert \boldsymbol{W} \rVert_F$ denotes the Frobenious norm of the matrix $\boldsymbol{W}$.  The set of non-negative real numbers is denoted by \(\mathbb{R}^+\). 
The \(l_p\) norm (or semi-norm) of  \(\boldsymbol{w} \in \mathbb{R}^K\) is defined as \(\lVert \boldsymbol{w} \rVert_p = \left( \sum_{k=1}^K |w_k|^p \right)^{\frac{1}{p}}\), where \(p > 0\). The symbol \( \lfloor \cdot \rfloor \) denotes the floor operation, defined as $\lfloor x \rfloor := \max\{ m \in \mathbb{Z} | m \le x \}$, where $\mathbb{Z}$ represents the set of all integers. 

\section{Weighted Sums of Random Kitchen Sinks}\label{sec: rks}
\noindent
At first, we review W-RKS as one of the generalized methods for learning with random features. Let us define the training dataset \(\mathcal{D} = \{(\boldsymbol{x}_n, \boldsymbol{y}_n)\}_{n=1}^{N}\), where each \(M\)-dimensional input vector \(\boldsymbol{x}_n \in \mathcal{X}\), \(n=1, \dots, N\), is sampled from the input domain \(\mathcal{X} \subseteq \mathbb{R}^M\) and paired with a \(J\)-dimensional vector-valued label\footnote{For multi-class classification problems, vector-valued labels can be constructed using one-hot encoding. A more detailed discussion on the treatment of multi-class classification, including how the main analytical result in \cite{rahimi2008weighted} for binary classification can be extended to the multi-class setting, is provided in Section 1 of Supplementary Materials.} \(\boldsymbol{y}_n \in \mathcal{Y}\), where \(\mathcal{Y} \subseteq \mathbb{R}^J\).

We assume that every input-label pair $(\boldsymbol{x}, \boldsymbol{y})$ in the training data is drawn independently from an unknown and fixed distribution $P(\boldsymbol{x}, \boldsymbol{y})$. Our goal is to create a mapping from inputs to labels using a hypothesis function \(\boldsymbol{f}: \mathcal{X} \rightarrow \mathcal{Y}\) that accurately predicts the labels. To accomplish this, the W-RKS method initially samples \( K \) random \( I \)-dimensional parameter vectors, denoted as \( \boldsymbol{\theta}_1, \dots, \boldsymbol{\theta}_K \in \mathbb{T} \), where \(\mathbb{T} \subseteq \mathbb{R}^I\). These vectors are drawn independently and identically from a specified distribution \( Q(\boldsymbol{\theta}) \). For simplicity, these sampled parameters are collectively represented as \(\boldsymbol{S} := [\boldsymbol{\theta}_1, \dots, \boldsymbol{\theta}_K]\). Subsequently, using a feature function \(\phi(\cdot, \cdot) : \mathcal{X} \times \mathbb{T} \rightarrow [-1, 1]\), the hypothesis function \(\boldsymbol{f}\) is defined as \(\boldsymbol{f}(\boldsymbol{x}) = \boldsymbol{W} \boldsymbol{\phi}(\boldsymbol{x}, \boldsymbol{S})\), for all \(\boldsymbol{x} \in \mathcal{X}\), where $\boldsymbol{W}\in \mathbb{R}^{J\times K}$ is the weight matrix associated with $\boldsymbol{f}$, and
\begin{equation}\label{eq:feature vector}
	\boldsymbol{\phi}(\boldsymbol{x},\boldsymbol{S}) := [\phi(\boldsymbol{x},\boldsymbol{\theta}_1), \dots, \phi(\boldsymbol{x},\boldsymbol{\theta}_K)]^T.
\end{equation}
We define the family of hypothesis functions $\mathcal{F}_k$, $k\in [K]$, as follows
\begin{align}\label{eq:F_k}
	\mathcal{F}_{k} := \bigg\{ \boldsymbol{f}   \bigg| 
	~ \boldsymbol{f}(\boldsymbol{x}) = \sum_{i=1}^{k} \boldsymbol{w}_{i} \phi(\boldsymbol{x}, \boldsymbol{\theta}_{u_i}), ~\boldsymbol{x}\in \mathcal{X},
	 \boldsymbol{w}_i \in \mathbb{R}^{J}, u_i \in [K], i\in[k] \bigg\}.
\end{align}
Based on the definition in \eqref{eq:F_k}, $\mathcal{F}_{k}$ contains all functions with less than or equal to $k$ randomly generated features.
The risk minimizer function \(\boldsymbol{f}^*_k(\boldsymbol{x})\) over the W-RKS family \(\mathcal{F}_k\) is defined as follows \cite{rahimi2008weighted,li2019towards,li2021towards}
\begin{equation}\label{eq:risk minimizer optimization}
	\boldsymbol{f}^*_k = \operatorname*{argmin}_{\boldsymbol{f}\in \mathcal{F}_{ k}} R[\boldsymbol f],
\end{equation}
where
\begin{equation}\label{eq:risk function}
	R[\boldsymbol{f}] = \int_{\mathcal{X}\times \mathcal{Y}} l(\boldsymbol{f}(\boldsymbol{x}),\boldsymbol{y})\text{d}P(\boldsymbol{x},\boldsymbol{y}),
\end{equation}
is the risk functional, $l:\mathcal{X}\times\mathcal{Y}\rightarrow \mathbb{R}^+$ is the loss function, and $P(\boldsymbol{x}, \boldsymbol{y})$ represents the fixed but unknown distribution of the input and label vector pairs, respectively. 
Since the data distribution \(P(\boldsymbol{x}, \boldsymbol{y})\) is unknown, the risk functional \(R[\boldsymbol{f}]\) in \eqref{eq:risk minimizer optimization} is generally not accessible. However, it can be approximated by the empirical risk functional \(\hat{R}[\boldsymbol{f}]\), defined as follows \cite{rahimi2008weighted,li2019towards,li2021towards}
\begin{equation}\label{eq:empirical risk}
	\hat{R}[\boldsymbol{f}] :=\frac{1}{N}\sum_{n=1}^{N}l(\boldsymbol{f}(\boldsymbol{x}_n), \boldsymbol{y}_n). 
\end{equation}

By replacing the risk functional \( R[\boldsymbol{f}] \) in \eqref{eq:risk minimizer optimization} with the empirical risk functional \( \hat{R}[\boldsymbol{f}] \), and introducing a regularization term alongside the loss function to control computational complexity, we obtain \cite{rahimi2008weighted,li2019towards,li2021towards}
\begin{equation}\label{eq:regularized erm 2nd}
	\hat{\boldsymbol{f}}_k = \operatorname*{argmin}_{\boldsymbol{f} \in \mathcal{F}_{ k}} \left\{\hat{R}[\boldsymbol{f}] +\frac{\lambda}{N} \lVert \boldsymbol{W} \rVert_F^2\right\},
\end{equation}
where $\lambda > 0$ is the regularization constant and $\boldsymbol{f}(\boldsymbol{x}) = \boldsymbol{W} \boldsymbol{\phi}(\boldsymbol{x},\boldsymbol{S})$. For inference tasks, it is reasonable to expect that the empirical risk minimizer $\hat{\boldsymbol{f}}_k$ in \eqref{eq:regularized erm 2nd} provides a close approximation to the true risk minimizer $\boldsymbol{f}^*_k$ in \eqref{eq:risk minimizer optimization}. For this to hold, it is necessary that the empirical risk $\hat{R}[\boldsymbol{f}]$ be a close approximation of the risk $R[\boldsymbol{f}]$, for every $\boldsymbol{f} \in \mathcal{F}$. The following definition introduces a class of loss functions for which this property holds as the training dataset size increases.
\begin{definition}[Glivenko-Cantelli (GC) Class of Loss Functions \cite{hajiani2024model,wainwright2019high}]\label{def:Glivenko-Cantelli}
	The class of loss functions $\mathcal{L}= \left\{ l(\boldsymbol{f}(\cdot),\cdot) \middle| \boldsymbol{f} \in \mathcal{F} \right\}$ over the domain $\mathcal{X}\times \mathcal{Y}$ with a  given distribution $P(\boldsymbol{x},\boldsymbol{y})$ is said to be Glivenko-Cantelli class if for every $\epsilon > 0$ the following holds
	\begin{equation}\label{eq:glivenko-cantelli}
		\lim_{N\rightarrow \infty} \mathbb{P}\left[ \sup_{\boldsymbol{f}\in \mathcal{F}} \left| \frac{1}{N} \sum_{n=1}^N l(\boldsymbol{f}(\boldsymbol{x}_n), \boldsymbol{y}_n) - R[\boldsymbol{f}] \right| > \epsilon \right] = 0. 
	\end{equation}
\end{definition}
For every bounded loss function (e.g., the squared error loss with bounded inputs), Hoeffding’s inequality (see Lemma 4.5 in \cite{ShalevShwartz2014Understanding}) holds, which is a specific case of the Glivenko–Cantelli condition in Definition \ref{def:Glivenko-Cantelli}. 

The condition in \eqref{eq:glivenko-cantelli} can be expressed as an equivalent $\epsilon$-$\delta$ definition of the limit \cite{wainwright2019high}. For every \(\delta \in (0,1]\), there exists an upper bound \(\epsilon(N, \delta)\) of the order \(o(1)\) and a positive integer \(N_1\) such that for all \(N \geq N_1\), the following inequality holds
\begin{align}\label{eq:epsilon-delta bound}  
	\mathbb{P}\left[ \sup_{\boldsymbol{f} \in \mathcal{F}} \left| \hat{R}[\boldsymbol{f}] - R[\boldsymbol{f}] \right| > \epsilon(N, \delta) \right] \leq \delta.  
\end{align}

% Furthermore, the function \(\hat{\boldsymbol{f}}^{(\lambda)}_k\) can be expressed as \(\hat{\boldsymbol{f}}^{(\lambda)}_k(\boldsymbol{x}) = \hat{\boldsymbol{W}}^{(\lambda, k)} \boldsymbol{\phi}(\boldsymbol{x}, \boldsymbol{S})\), where \(\hat{\boldsymbol{W}}^{(\lambda, k)} \in \mathbb{R}^{J \times K}\) represents its corresponding weight matrix containing at most \(k\) non-zero columns.

Due to the lower computational complexity of randomization than optimization, W-RKS is a fast method \cite{rahimi2008weighted}, and from learning guarantees in \cite{rahimi2008weighted,li2019towards,li2021towards}, with a sufficiently large number of random samples $K$ and dataset size $N$, the solution in \eqref{eq:regularized erm 2nd} will be close to the risk minimizer in \eqref{eq:risk minimizer optimization}. 

Since the features in \eqref{eq:feature vector} are generated randomly, it is likely that many of these $K$ random features do not significantly contribute to the label prediction. Thus, we can safely eliminate these redundant features without notably affecting performance. Given $K$ random features, we investigate how we can reduce the number of these features to $\hat{k}<K$ such that $\hat{k}$ is the minimum possible value that preserves the label prediction performance. 
This problem can be cast as a special case of \emph{nested model selection} \cite{hajiani2024model} which has some practically and theoretically useful properties.

\section{Feature Selection Through Sequentially Nested Model Selection}\label{sec: ftr-slctn nstd-mdl-slctn}
This section presents the feature selection problem for W-RKS as a specific instance of nested model selection. 
Before proceeding, it is essential to review certain definitions and theorems from \cite{hajiani2024model}.

\begin{definition}[Nested Models \cite{hajiani2024model}]\label{def: nested model}
	Consider two families of hypothesis functions $\mathcal{F}_1$ and $\mathcal{F}_2$ with the input domain $\mathcal{X}$, whose learnable parameters are from domains $\Omega_1$ and $\Omega_2$, respectively. The family $\mathcal{F}_1$ is said \emph{to be nested} in $\mathcal{F}_2$, denoted by $\mathcal{F}_1 \subseteq \mathcal{F}_2$, if and only if  for every $\boldsymbol{f}_1 \in \mathcal{F}_1$ with the associated learnable parameter $\boldsymbol{W}_1 \in \Omega_1$, there is a function $\boldsymbol{f}_2 \in \mathcal{F}_2$ with the associated learnable parameter $\boldsymbol{W}_2 \in \Omega_2$ such that for all $\boldsymbol{x}\in \mathcal{X}$, $\boldsymbol{f}_1(\boldsymbol{x}) = \boldsymbol{f}_2(\boldsymbol{x}) $. Moreover, a model class $\{\mathcal{F}_k\}_{k=1}^K$ is said to be \emph{sequentially nested class}, denoted by $\mathcal{F}_1 \subseteq \mathcal{F}_2 \subseteq \dots \subseteq \mathcal{F}_K$, if for $K > 2$, every $i, j \in [K]$, with $i\le j$, $\mathcal{F}_i$ is nested in $\mathcal{F}_j$.
\end{definition}

Based on the definition of the sequentially nested class, the following useful property holds.
\begin{theorem}[From \cite{hajiani2024model}]\label{thm:elbow risk}
	
	For the sequentially nested model class $\{\mathcal{F}_k\}_{k=1}^K$ with respective risk minimizers $\{\boldsymbol{f}^*_k\}_{k=1}^K$, there is a model index $k^* \in [K]$ with the following property
	\begin{equation}\label{eq:risk elbow property}
		R[\boldsymbol{f}^*_1] \ge \dots \ge R[\boldsymbol{f}^*_{k^*-1}] >R[\boldsymbol{f}^*_{k^*}] = \dots = R[\boldsymbol{f}^*_{K}].  
	\end{equation}
\end{theorem}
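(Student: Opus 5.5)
The statement follows from two facts: nestedness makes the sequence of minimal risks non-increasing, and a finite sequence has a well-defined first index at which it reaches its last value. The plan is to establish the first, then define $k^*$ directly.

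\emph{Monotonicity.} Fix $i\le j$ in $[K]$. By Definition~\ref{def: nested model}, every $\boldsymbol{f}\in\mathcal{F}_i$ has a counterpart $\boldsymbol{g}\in\mathcal{F}_j$ with $\boldsymbol{g}(\boldsymbol{x})=\boldsymbol{f}(\boldsymbol{x})$ for all $\boldsymbol{x}\in\mathcal{X}$. By \eqref{eq:risk function}, the risk $R[\cdot]$ depends on $\boldsymbol{f}$ only through its values on $\mathcal{X}$, so $R[\boldsymbol{g}]=R[\boldsymbol{f}]$. Applying this to $\boldsymbol{f}=\boldsymbol{f}^*_i$ and using the minimality of $\boldsymbol{f}^*_j$ over $\mathcal{F}_j$ from \eqref{eq:risk minimizer optimization}, we get
\[
R[\boldsymbol{f}^*_j]\le R[\boldsymbol{g}]=R[\boldsymbol{f}^*_i].
\]
Hence $R[\boldsymbol{f}^*_1]\ge R[\boldsymbol{f}^*_2]\ge\dots\ge R[\boldsymbol{f}^*_K]$. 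For the families in \eqref{eq:F_k} nestedness can also be seen concretely: pad a $k$-term sum with zero weight vectors $\boldsymbol{w}_i=\boldsymbol{0}$.

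\emph{Choice of $k^*$.} Set
\[
k^*:=\min\{k\in[K] : R[\boldsymbol{f}^*_k]=R[\boldsymbol{f}^*_K]\}.
\]
This set is nonempty because it contains $K$, so $k^*$ is well defined.
\begin{itemize}
\item For $k\ge k^*$, monotonicity gives $R[\boldsymbol{f}^*_{k^*}]\ge R[\boldsymbol{f}^*_k]\ge R[\boldsymbol{f}^*_K]$. The two outer terms are equal, so $R[\boldsymbol{f}^*_k]=R[\boldsymbol{f}^*_K]$. This gives the chain of equalities in \eqref{eq:risk elbow property}.
\item If $k^*>1$, then $R[\boldsymbol{f}^*_{k^*-1}]\ge R[\boldsymbol{f}^*_{k^*}]$ by monotonicity. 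Equality is excluded by the minimality of $k^*$, so the inequality is strict. This gives the strict step in \eqref{eq:risk elbow property}.
\item The weak inequalities $R[\boldsymbol{f}^*_1]\ge\dots\ge R[\boldsymbol{f}^*_{k^*-1}]$ follow from monotonicity.
\item If $k^*=1$, the strict part is vacuous and all the risks are equal.
\end{itemize}

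\emph{Main obstacle.} The combinatorial part is trivial. The only delicate point is that the minimizers $\boldsymbol{f}^*_k$ in \eqref{eq:risk minimizer optimization} must exist. I would assume attainment, as the statement implicitly does by naming $\boldsymbol{f}^*_k$; for instance, the loss is nonnegative and, under a norm constraint on $\boldsymbol{W}$, continuous over a compact set. If attainment is not available, the argument goes through verbatim with $R[\boldsymbol{f}^*_k]$ replaced by the infimum $\inf_{\boldsymbol{f}\in\mathcal{F}_k}R[\boldsymbol{f}]$. These infima are finite because $l\ge 0$, and nestedness gives $\inf_{\mathcal{F}_j}R\le\inf_{\mathcal{F}_i}R$ for $i\le j$ in the same way as above.
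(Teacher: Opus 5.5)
Your proof is correct. The paper gives no argument of its own here; it simply defers to Corollary~3 of \cite{hajiani2024model}. Your proposal is therefore a self-contained version of what the paper outsources.

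You argue in two steps. First, nestedness, together with the fact that $R[\cdot]$ depends on $\boldsymbol{f}$ only through its values on $\mathcal{X}$, gives $R[\boldsymbol{f}^*_j]\le R[\boldsymbol{f}^*_i]$ for $i\le j$. Second, you take $k^*$ to be the first index at which this finite non-increasing sequence reaches its terminal value.

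This is presumably the same mechanism as in the cited source; the paper itself appeals to Lemma~1 of that reference for the analogous monotonicity of empirical risks. So the difference is in explicitness rather than in idea.

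What your version adds is that it makes visible two points the statement glosses over. One is the degenerate case $k^*=1$, where all minimal risks coincide and the strict inequality in \eqref{eq:risk elbow property} is vacuous. The other is the existence of the minimizers $\boldsymbol{f}^*_k$, which you correctly sidestep by passing to infima.

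One small slip: $l\ge 0$ only makes those infima bounded below, not finite. Nothing breaks, though, because the same argument works verbatim for a non-increasing sequence in $[0,\infty]$.
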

\begin{proof}
	See the proof of Corollaray 3 in \cite{hajiani2024model}.
\end{proof}
Let us define the global minimum risk $R^*$ as follows
\begin{align}\label{eq:rstar}
	R^* := \min_{k \in [K]} R[\boldsymbol{f}^*_k]. 
\end{align}
According to \eqref{eq:risk elbow property}, \(k^*\) denotes the index of the \emph{most parsimonious family}\footnote{The most parsimonious family refers to the family of functions that includes the global risk minimizer while having the minimum number of learnable parameters (see page 100 of \cite{claeskens2008model}).} of functions where the global minimum risk \(R^*\) is achievable. Our goal is to create a method for estimating the model index \(k^*\)  based on the provided training data.
In practice, the true risk values are not accessible. Instead, we can use the sequence of empirical risks \(\{\hat{R}[\hat{\boldsymbol{f}}_k]\}_{k=1}^K\) to estimate the model index \(k^*\). As stated in Lemma 1 of \cite{hajiani2024model}, the sequence of empirical risks for sequentially nested classes is non-increasing. Consequently, the following relationships hold  
\begin{align}\label{eq:non-incrsng emp risks}  
	\hat{R}[\hat{\boldsymbol{f}}_1] \geq \hat{R}[\hat{\boldsymbol{f}}_2] \dots \geq \hat{R}[\hat{\boldsymbol{f}}_K].  
\end{align}
From the sequence of empirical risks in \eqref{eq:non-incrsng emp risks}, we aim to obtain a reliable estimate \(\hat{k}\)  of  \(k^*\),  the most parsimonious model index. To this end, we exploit the properties of nested models. Before proceeding, we demonstrate how nested model selection applies to the feature selection problem.

%It follows from  that for each \(k \in [K]\), we have \(\mathcal{F}_{k} \subseteq \mathcal{F}_{K}\). Moreover, for any \(k \in [K-1]\), it holds that \(\mathcal{F}_{k} \subseteq \mathcal{F}_{k+1}\). Consequently, the family of sets \(\{\mathcal{F}_{k}\}_{k=1}^K\) constitutes a sequentially nested class.
Based on Definition \ref{def: nested model}, the families of functions constructed according to \eqref{eq:F_k} are sequentially nested. However, 
there is a major challenge in feature selection over the families \(\{\mathcal{F}_{k}\}_{k=1}^K\), which is the necessity of evaluating each of the \(\binom{K}{k}\) possible combinations of \(k\) features during empirical risk minimization. This requirement leads to an exponential growth in computational complexity, rendering feature selection over the families \(\{\mathcal{F}_k\}_{k=1}^K\) computationally infeasible.
To address this challenge, we start by ranking the features in order of their importance, from the most to the least significant. We then construct a sequentially nested model class using this ranking to enable efficient feature selection. In the following, we elaborate on the method used for ranking the features.

Consider the hypothesis function \(\hat{\boldsymbol{f}}(\boldsymbol{x}) = \hat{\boldsymbol{W}} \boldsymbol{\phi}(\boldsymbol{x}, \boldsymbol{S})\), which is derived from the empirical risk minimization in \eqref{eq:regularized erm 2nd} over the family \(\mathcal{F}_K\). We denote $\boldsymbol{r} = [r_1,\dots, r_K]^T$ to be an ordering over the set $[K]$ in a way that the following property holds 
\begin{equation}\label{eq:emp sorting}
	\left\lVert \hat{\boldsymbol{w}}_{r_1} \right\rVert_p \ge  \left\lVert \hat{\boldsymbol{w}}_{r_2} \right\rVert_p \ge \dots 	\ge \left\lVert \hat{\boldsymbol{w}}_{r_K} \right\rVert_p,
\end{equation}
where $\hat{\boldsymbol{w}}_{r_k}$, $k\in [K]$ denotes the $r_k$-th column of $\hat{\boldsymbol{W}}$, which is assigned to the rank $k$ according to the ordering in \eqref{eq:emp sorting}.
According to this ordering, the feature \(\boldsymbol{\phi}(\cdot, \boldsymbol{\theta}_{r_1})\) is deemed to be the most significant, whereas \(\boldsymbol{\phi}(\cdot, \boldsymbol{\theta}_{r_K})\) is viewed as the least significant.
Based on the vector $\boldsymbol{r}$, we define the family $\mathcal{G}_k$ as follows
\begin{align}\label{eq:F_k_r_hat}
	\mathcal{G}_k := \bigg\{ \boldsymbol{g} \bigg| \boldsymbol{g}(\boldsymbol{x}) =\sum_{i=1}^{k} \boldsymbol{w}_{i} \phi(\boldsymbol{x}, \boldsymbol{\theta}_{r_i}), 
	~\boldsymbol{x}\in \mathcal{X},
	~  \boldsymbol{w}_i \in \mathbb{R}^{J}, i\in[k] \bigg\}.
\end{align}
It immediately follows from the above definition that the class \(\{\mathcal{G}_k\}_{k=1}^K\) is sequentially nested. 
Thanks to \eqref{eq:risk minimizer optimization} and \eqref{eq:regularized erm 2nd}, for each \( k \in [K] \), we define \(\boldsymbol{g}^*_k\) and \(\hat{\boldsymbol{g}}_k\) as the risk minimizer and (regularized) empirical risk minimizer solutions over the families \(\{\mathcal{G}_k\}_{k=1}^K\), respectively.
Considering that \(\{\mathcal{G}_k\}_{k=1}^K\) is a sequentially nested family of functions, the relationships in equations \eqref{eq:risk elbow property} and \eqref{eq:non-incrsng emp risks} also hold true for this family.   

In what follows, we demonstrate that some subsets of \(\{\mathcal{G}_k\}_{k=1}^K\) are also can be sequentially nested. This property, which is established in Proposition \ref{prp:nesting for sorted subsets}, will be effectively leveraged in the subsequent section to simplify the feature selection process.
\begin{proposition}[Nestedness over Sub-classes]\label{prp:nesting for sorted subsets}  
	Consider the sequentially nested class \(\{\mathcal{G}_k\}_{k=1}^{K}\), where \(\mathcal{G}_k\) is defined in \eqref{eq:F_k_r_hat} based on the ordering \(\boldsymbol{r}:=[r_1,\dots,r_K]^T\) that satisfies the property in \eqref{eq:emp sorting}. Let \(\boldsymbol{u}:=[u_1,\dots, u_{\tilde{K}}]^T\), where \(\tilde{K} \leq K\) is a fixed positive integer, and $u_k\in [K]$ such that $u_1 < u_2 <\dots < u_{\tilde{K}}$ holds. It then follows that the subsampled class \(\{\mathcal{G}_{u_k}\}_{k=1}^{\tilde{K}}\) also retains the sequentially nested property. 
\end{proposition}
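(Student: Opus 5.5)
The plan is to verify Definition \ref{def: nested model} directly for the subsampled class $\{\mathcal{G}_{u_k}\}_{k=1}^{\tilde{K}}$. Concretely, I will show that for any indices $i \le j$ in $[\tilde{K}]$, the family $\mathcal{G}_{u_i}$ is nested in $\mathcal{G}_{u_j}$. The argument rests on two facts: the strict monotonicity $u_1<\dots<u_{\tilde{K}}$ of the index vector $\boldsymbol{u}$, and the nestedness of the original class $\{\mathcal{G}_k\}_{k=1}^K$, which was already noted right after \eqref{eq:F_k_r_hat}.

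The first step is to fix $i\le j$ in $[\tilde{K}]$. Since $\boldsymbol{u}$ is strictly increasing, this gives $u_i \le u_j$, and both indices lie in $[K]$.

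The second step is to take an arbitrary $\boldsymbol{g}\in\mathcal{G}_{u_i}$, written as $\boldsymbol{g}(\boldsymbol{x})=\sum_{l=1}^{u_i}\boldsymbol{w}_l\,\phi(\boldsymbol{x},\boldsymbol{\theta}_{r_l})$ with weights $\boldsymbol{w}_l\in\mathbb{R}^J$. I then build an explicit element of $\mathcal{G}_{u_j}$ by padding with zero weights:
\begin{equation*}
\tilde{\boldsymbol{w}}_l=\boldsymbol{w}_l \ \text{for } l\le u_i, \qquad \tilde{\boldsymbol{w}}_l=\boldsymbol{0} \ \text{for } u_i<l\le u_j .
\end{equation*}
This step uses that both families are built on the same ordering $\boldsymbol{r}$. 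Because of that, the first $u_i$ features of $\mathcal{G}_{u_j}$ are exactly the features of $\mathcal{G}_{u_i}$. The padded function therefore agrees with $\boldsymbol{g}$ pointwise on $\mathcal{X}$, which is precisely the condition in Definition \ref{def: nested model}.

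Alternatively, the result can be read off from the sequential nestedness of $\{\mathcal{G}_k\}_{k=1}^K$: since $u_i\le u_j$ are both in $[K]$, the definition already gives $\mathcal{G}_{u_i}\subseteq\mathcal{G}_{u_j}$.

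The last step is to check the remaining formal requirement of Definition \ref{def: nested model}, namely $\tilde{K}>2$. If $\tilde{K}\le 2$, the claim reduces to the trivial or single-inclusion case and is handled by the same argument.

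The only point that needs care is the one used in the second step: the subsampled families must share the same ordering $\boldsymbol{r}$, so that their feature sets form an increasing chain of prefixes. An arbitrary choice of subsets would not give this. Everything else is routine.
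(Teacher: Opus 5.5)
Your proof is correct and rests on the same idea as the paper: a subsequence of the nested chain $\{\mathcal{G}_k\}_{k=1}^K$, taken along increasing indices $u_k$, is again nested. The paper gets this by chaining consecutive inclusions through transitivity, whereas you verify each inclusion $\mathcal{G}_{u_i}\subseteq\mathcal{G}_{u_j}$ directly, either by the explicit zero-padding embedding over the shared ordering $\boldsymbol{r}$ or from the all-pairs clause of Definition~\ref{def: nested model}. Your version is more explicit but uses no new ingredient.
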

\begin{proof} 
	From the definition in \eqref{eq:F_k_r_hat}, the assumption that the sequence $(u_k)_{k=1}^{\tilde{K}}$ is ascending, and the transitivity of the nesting operation (i.e., for every $k\in [K-2]$, $\mathcal{G}_{k}\subseteq \mathcal{G}_{k+1}$ and $\mathcal{G}_{k+1}\subseteq \mathcal{G}_{k+2}$ imply $\mathcal{G}_{k} \subseteq \mathcal{G}_{k+2}$), it immediately follows that
	$\mathcal{G}_{u_1} \subseteq \dots \subseteq \mathcal{G}_{u_{\tilde{K}}}$. As a result, $\{\mathcal{G}_{u_{k}}\}_{k=1}^{\tilde{K}}$ 
	forms a sequentially nested class.
\end{proof}
In the following sections, we demonstrate that the first-order and second-order differences of empirical risk sequences, which are defined below, are useful for estimating the number of informative features based on vector \(\boldsymbol{r}\).
\begin{definition}[Successive Empirical Excess Risk (SEER) Sequence \cite{hajiani2024model}]\label{def:seer}
	We refer to the sequence \(\{\Delta \hat{R}_k\}_{k=1}^{K-1}\) as the successive empirical excess risk (SEER) sequence, where each \(\Delta \hat{R}_k\) is defined as follows 
	\begin{align}\label{eq:seer}  
		\Delta \hat{R}_k := \hat{R}[\hat{\boldsymbol{g}}_k] - \hat{R}[\hat{\boldsymbol{g}}_{k+1}].  
	\end{align}  
\end{definition}
It will be demonstrated in the subsequent sections that the successive differences of SEER, defined below, play a crucial role in estimation of \(k^*\).
\begin{definition}[Successive Empirical Excess Risk Difference Sequences (SEEDS)]\label{def:seeds}
	We refer to the sequence \(\{\Delta^2 \hat{R}_k\}_{k=2}^{K-1}\) as the Successive Empirical Excess Risk Difference Sequence (SEEDS), where each \(\Delta^2 \hat{R}_k\) is defined as follows
	\begin{align}\label{eq:seeds}
		\Delta^2 \hat{R}_k := \Delta\hat{R}_{k-1} - \Delta\hat{R}_{k}.
	\end{align}
	
\end{definition}

%As assumed in the Glivenko-Cantelli definition, where the empirical risks approach to the true risk values in the limit, we can define \(\Delta R_k\) and \(\Delta^2 R_k\) as the asymptotic limits of \(\Delta \hat{R}_k\) and \(\Delta^2 \hat{R}_k\), respectively\footnote{For a detailed analysis, refer to the proof of Lemma 2 in \ref{sec:main-thm-proof}.}. Consequently, the excess risk $\Delta R_k$ is defined as follows
%\begin{align}\label{eq:delta rk}
%	\Delta R_k := R[\boldsymbol{g}^*_{k}] - R[\boldsymbol{g}^*_{k+1}].
%\end{align}
%Moreover, the excess risk difference $\Delta^2 R_k$ is defined as follows
%\begin{align}\label{eq:delta2 rk}
%	\Delta^2 R_k := \Delta R_{k-1} - \Delta R_{k}.
%\end{align}
Based on the established definitions and properties, we are ready to introduce our feature selection algorithm in the following section.

\section{Successive Empirical Excess Risk Difference Maximization} \label{sec:seedmax}
In the previous sections, we discussed feature extraction using W-RKS and illustrated how the feature selection problem can be reformulated as a specific case of sequentially nested model selection. Utilizing the characteristics of nested models and W-RKS features, we now introduce a method to estimate \(k^*\) within \([K]\), the index of the most parsimonious family that includes the global risk minimizer. 
As previously discussed, we constructed the sequentially nested families \(\{\mathcal{G}_k\}_{k=1}^K\) by incrementally adding features in accordance with the ordering established in \eqref{eq:emp sorting}. Based on \eqref{eq:risk elbow property}, \(k^*\) can be interpreted as a \emph{change point in the trend of the risk sequence} \(\{R[\boldsymbol{g}^*_k]\}_{k=1}^{K}\) over the sequentially nested class \(\{\mathcal{G}_k\}_{k=1}^K\). 

Given the model index \(k^*\), the sequence \(\small\{R[\boldsymbol{g}^*_k]\}_{k=1}^K\) can be divided into two distinct subsequences: \(\small\{R[\boldsymbol{g}^*_k]\}_{k=1}^{k^* -1}\) and \(\small\{R[\boldsymbol{g}^*_k]\}_{k=k^*}^{K}\).  
In the first subsequence, \(\small\{R[\boldsymbol{g}^*_k]\}_{k=1}^{k^* -1}\), the risk values are substantially larger than \(R^*\) which is defined in \eqref{eq:rstar}, and they exhibit a rapid decline as \(k\) increases. Conversely, in the second subsequence, \(\small\{R[\boldsymbol{g}^*_k]\}_{k=k^*}^{K}\), all the terms have converged to their minimum value, \(R^*\). Beyond \(k^*\), no further reduction in the risk values can be observed, reflecting the stabilization of the sequence at its optimal value.
However, the risk values \(\small\{R[\boldsymbol{g}^*_k]\}_{k=1}^{K}\) are not available, and we cannot locate $k^*$ based on the expressed trend over the sequence $\small\{R[\boldsymbol{g}^*_k]\}_{k=1}^{K}$. 

Alternatively, we can investigate the trend of the sequence $\{\hat{R}[\hat{\boldsymbol{g}}_k]\}_{k=1}^{K}$ for estimating $k^*$. Intuitively, based on the Glivenko-Cantelli property in \eqref{eq:glivenko-cantelli}, as the training sample size \(N\) increases, the empirical risk values \(\{\hat{R}[\hat{\boldsymbol{g}}_k]\}_{k=1}^{K}\) almost surely converge to their corresponding risks, expressed as\footnote{The intuition behind this will be formally justified in the proof of Theorem \ref{thm:pac lower bound}. In particular, refer to the PAC bound in \eqref{eq:erm gap} in \ref{sec:main-thm-proof}.}
\begin{align}\label{eq:almostsurely}
	\hat{R}[\hat{\boldsymbol{g}}_k] \xrightarrow[N\rightarrow\infty]{\text{a.s.}} R[\boldsymbol{g}^*_k], \quad \forall k \in [K],  
\end{align}
where \(\xrightarrow[N\rightarrow\infty]{\text{a.s.}}\) denotes almost sure convergence as $N$ goes to infinity. 
Consequently, we can estimate \(k^*\) by analyzing the behavior of the empirical risk sequence \(\{\hat{R}[\hat{\boldsymbol{g}}_k]\}_{k=1}^{K}\).
We aim to identify the model index \(\hat{k}\) that divides the sequence of empirical risk values \(\{\hat{R}[\hat{\boldsymbol{g}}_k]\}_{k=1}^{K}\) into two distinct subsequences \(\{\hat{R}[\hat{\boldsymbol{g}}_k]\}_{k=1}^{\hat{k}-1}\) and \(\{\hat{R}[\hat{\boldsymbol{g}}_k]\}_{k=\hat{k}}^{K}\), each exhibiting differing trends.

Based on the rate of reduction in the empirical risk, as quantified by \(\Delta^2 \hat{R}_k\)  in \eqref{eq:seeds}, we construct the SEEDS sequence \(\{\Delta^2 \hat{R}_k\}_{k=2}^{K-1}\). The model index corresponding to the maximum value of this sequence is then identified as the change point in the trend of the empirical risk sequence. This criterion is mathematically expressed as follows:
\begin{align}\label{eq:seed-max v1}
	\hat{k} := \operatorname{argmax}_{k \in [K-1]\setminus \{1\}} \Delta^2 \hat{R}_{k},
\end{align}
where it is assumed that the most parsimonious model index \(k^*\) lies within \(\{2,3,\dots, K-1\}\), aligning with the search space 
for the approximation \(\hat{k}\). While this optimization provides a structured approach to locate the change point \(\hat{k}\), it poses a significant computational challenge.
From the definition of \(\Delta^2\hat{R}_k\) in \eqref{eq:seeds}, determining \(\hat{k}\) requires computing three \emph{empirical risk minimization} (ERM) solutions, namely \(\hat{\boldsymbol{g}}_{k-1}\), \(\hat{\boldsymbol{g}}_{k}\), and \(\hat{\boldsymbol{g}}_{k+1}\), corresponding to the sequentially nested model families \(\mathcal{G}_{k-1}\), \(\mathcal{G}_{k}\), and \(\mathcal{G}_{k+1}\), respectively. Consequently, in the worst-case scenario, calculating the entire sequence \(\{\Delta^2 \hat{R}_k\}_{k=2}^{K-1}\) requires $K$ retrainings to compute the ERM solutions \(\{\hat{\boldsymbol{g}}_k\}_{k=1}^{K}\).

Since \(K\) often corresponds to a large number in practical applications, this results in a computationally prohibitive runtime, making the direct implementation of this optimization infeasible for large-scale problems.
To address the computational challenge associated with the optimization in \eqref{eq:seed-max v1}, we utilize the property described in Proposition \ref{prp:nesting for sorted subsets}, establishing that any subsample of sequentially nested families is also sequentially nested. Specifically, rather than comparing families that differ by only one or two features, we partition the range of model indices \([K]\) into (integer) \(s\) segments. Within this framework, comparisons are restricted to each segment's starting and ending points.

Limiting the computations to these segment boundaries decreases the number of retraining operations  \(K\) (in the worst case) to \(s+1\), where the integer $s > 0$ and  \(s \ll K\). This reduction significantly reduces the computational burden, making the optimization process feasible for large-scale applications.
Moreover, since the models being compared correspond to feature subsets that differ substantially (i.e., those at the segment boundaries), the differences in empirical risk values are more likely to be meaningful. This approach reduces the risk of erroneously identifying the change point \(\hat{k}\) due to marginal or negligible fluctuations in empirical risk values.

By leveraging this strategy, we achieve a more efficient and robust estimation of the change point \(\hat{k}\), addressing the limitations of the original optimization process.
Now, consider the sequentially nested class \(\{\mathcal{G}_k \}_{k=1}^K\). Let \(k_1\) and \(k_{s+1}\), where \(k_1 < k_{s+1}\), be two positive integers within the range \([K]\). To define the segment boundaries, we select \(s-1\) linearly spaced points \(\{k_i\}_{i=2}^{s}\) between \(k_1\) and \(k_{s+1}\), such that
\begin{align}\label{eq:sorted indices}
	k_1 < k_2 < \cdots < k_s < k_{s+1}, 
\end{align}
where the integer $s > 0$ is the number of discrete line segments created by the above partitioning.
Based on Proposition \ref{prp:nesting for sorted subsets}, the sequence of families \(\{\mathcal{G}_{k_i}\}_{i=1}^{s+1}\) is sequentially nested. Consequently, it possesses a most parsimonious model index \(k_{i^*} \in \{k_i\}_{i=1}^{s+1}\), which we aim to identify.

We claim that the following optimization yields a probably approximately correct (PAC) estimate of \(k_{i^*}\). Denoting this estimate by \(k_{\hat{i}}\) it is defined as follows:
\begin{equation}\label{eq:change point detection criterion}
	k_{\hat{i}} := \operatorname{argmax}_{k_i \in \{k_j\}_{j=1}^{s+1}} \Delta^2 \hat{R}_{k_i},
\end{equation}
where $\Delta^2 \hat{R}_{k_i} := \Delta \hat{R}_{k_{i-1}} - \Delta \hat{R}_{k_{i}}$, and $\Delta \hat{R}_{k_i} := \hat{R}[\hat{\boldsymbol{g}}_{k_i}] - \hat{R}[\hat{\boldsymbol{g}}_{k_{i+1}}]$. We refer to the optimization described above as \emph{Single-Layer Successive Empirical Excess Risk Difference Maximization (SL-SEED-MAX)}. As previously mentioned, the SL-SEED-MAX method involves at most \(s + 1\) retraining. However, this efficiency comes at a cost that \(k_{\hat{i}}\) may exceed \(\hat{k}\) as defined in \eqref{eq:seed-max v1}, which may result in retaining a larger number of features. Specifically, \(\hat{k}\) lies within the (updated) discrete search interval \((k_{\hat{i}-1}, k_{\hat{i}}]\). For the case where \( k_1 = 1 \) and \( k_{s+1} = K \), the length of this interval is approximately \( \lfloor \frac{K}{s} \rfloor \). For large values of \( K \), this length can be quite substantial, potentially leading to the retention of more redundant features.

\begin{figure}
	\centering
	\includegraphics[width=0.7\linewidth]{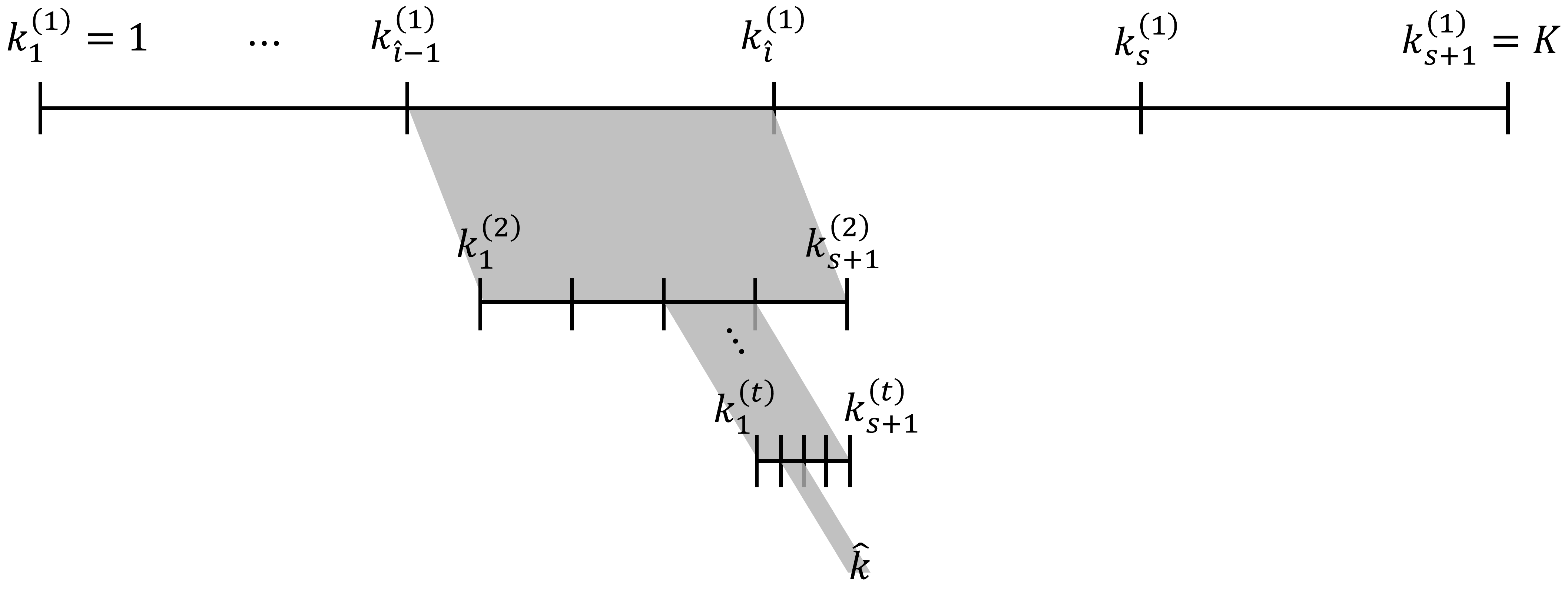}
	\caption{A visual illustration of the SEED-MAX algorithm (Algorithm \ref{alg:cpd}). 
	}
	\label{fig:seedmax visual illustration}
\end{figure}

To address this issue, as illustrated in Fig. \ref{fig:seedmax visual illustration}, the SL-SEED-MAX algorithm can be recursively applied to reduce the length of the feature selection search interval to a desirable small value. Algorithm \ref{alg:cpd} provides the details of this procedure.
At the first layer of feature selection, applying SL-SEED-MAX to the entire discrete search interval \([1, K]\) yields a smaller interval \((k_{\hat{i}-1}^{(1)}, k_{\hat{i}}^{(1)}]\), which contains \(\hat{k}\) and has a maximum length of \(\lfloor \frac{K}{s} \rfloor\). After \(t\) successive applications of SL-SEED-MAX, the search interval is reduced to \((k_{\hat{i}-1}^{(t)}, k_{\hat{i}}^{(t)}]\), with a maximum length of \(\lfloor \frac{K}{s^t} \rfloor\). 
After \(\lfloor \log_s K \rfloor\) repeats of SL-SEED-MAX, the length of the search interval is reduced to ensure it contains only \(\hat{k}\). 

We refer to this iterative application of SL-SEED-MAX, as described in \eqref{eq:change point detection criterion}, as the \emph{Multi-Layer SEED-MAX (ML-SEED-MAX)} algorithm, or simply \emph{SEED-MAX} for brevity. The detailed implementation of the SEED-MAX algorithm is presented in Algorithm \ref{alg:cpd}.
We will show in the next section that with high probability, the index \(k^*\) of the most parsimonious family over the original class \(\{\mathcal{G}_k\}_{k=1}^K\) will reside within the final search interval produced by Algorithm \ref{alg:cpd}. Specifically, with high probability, we have \(k^* \in (k_{\hat{i}-1}^{(t)}, k_{\hat{i}}^{(t)}]\).

\begin{algorithm}
	\caption{Successive Empirical Excess Risk Difference Maximization (SEED-MAX) Algorithm}\label{alg:cpd}
	
	\begin{algorithmic}[1]
		\Require 
		Training data \(\mathcal{D} = \{(\boldsymbol{x}_n, \boldsymbol{y}_n)\}_{n=1}^N\); 
		Feature vector \(\boldsymbol{\phi}(\cdot, \boldsymbol{S})\) defined in \eqref{eq:feature vector}, where \(\boldsymbol{S}\) is a random sample matrix; 
		Number of segments \(s\); 
		Depth parameter \(t\).  
		\Ensure 
		Number of retained features \(k_{\hat{i}}^{(t)}\), which serves as a consistent estimate of \(k^*\) based on the sorting \(\boldsymbol{r}\) in \eqref{eq:emp sorting}; 
		Lower bound of the search interval after \(t\) iterations, denoted as \(k_{\hat{i}-1}^{(t)}\).
		
		\State Compute the sorting $\boldsymbol{r}$ according to \eqref{eq:emp sorting}
		\State  $l \leftarrow 1$, $u \leftarrow K$, $j \leftarrow t$ 
		
		\While{$j > 0$}
		\State Compute the sequence of indices $\{k_i\}_{i=1}^{s+1}$ by partitioning the discrete interval ${l,\dots, u}$ to $s$ almost equal parts in a way that $k_1 = l < k_2 < \dots < k_{s+1} = u$.  
		
		\State Solve the optimization in \eqref{eq:change point detection criterion} for sequentially nested class $\{\mathcal{G}_{k_{i}}\}_{i=1}^{s+1}$ to obtain $k_{\hat{i}}$. 
		\State $l \leftarrow k_{\hat{i}-1}$
		\State $u \leftarrow k_{\hat{i}}$
		\State $j \leftarrow j - 1$
		\EndWhile
		\State $k_{\hat{i}}^{(t)} \leftarrow u$
		\State $k_{\hat{i} - 1}^{(t)} \leftarrow l$
		\Statex \textbf{Outputs:} $\boldsymbol{r}$,  $k_{\hat{i}}^{(t)}$, and $k_{\hat{i} - 1}^{(t)}$.
	\end{algorithmic}
\end{algorithm}

\section{Theoretical Analysis of SEED-MAX}\label{sec:seedmax analysis}
In this section, we derive a lower bound for the probability of correctly estimating \(k^*\) using the SEED-MAX algorithm.

\begin{theorem}\label{thm:pac lower bound}
	Suppose that the classes of loss functions $\mathcal{L}_k = \{ l(\boldsymbol{g}(\cdot), \cdot) | \boldsymbol{g} \in \mathcal{G}_k\}$, with $k\in [K]$, over the class $\{\mathcal{G}_k \}_{k=1}^K$ are Glivenko-Cantelli, which means for every $\epsilon>0$, the relation \eqref{eq:glivenko-cantelli} holds. Moreover, we assume that $\{(k, \Delta R_k)\}_{k=1}^{k^* -1}$ is strictly discrete midpoint concave\footnote{From \cite{murota2003discrete}, for the strictly midpoint concave mappings it follows that $\Delta R_i + \Delta R_{j} < 2 \Delta R_{\lfloor\frac{i+j}{2}\rfloor}$ for all $i,j \in [k^* -1]$.}. Then for every $\delta \in (0,\frac{1}{24(s-2)t}]$, there exists a positive integer $N_0$ such that for every $N \ge N_0$ the following holds for the solution $k_{\hat{i}}^{(t)}$ from Algorithm \ref{alg:cpd}
	\begin{equation}\label{eq:pac cpd lower bound}
		\mathbb{P}\left[ k^* \in \left(k_{\hat{i}-1}^{(t)}, k_{\hat{i}}^{(t)}\right]\right] \ge 1 - 24 (s-2) t \delta,
	\end{equation}
	where $t$ and $s$ are the depth parameter and number of segments, respectively and $(k_{\hat{i}}^{(t)} - k_{\hat{i}-1}^{(t)}) < \frac{K}{s^t}$.
\end{theorem}
\begin{proof}
	See \ref{sec:main-thm-proof}. 
\end{proof}
We define \( a := \frac{K}{s^t} \) as the resolution parameter, which represents the length of the feature selection search interval after \( t \) successive applications of the SL-SEED-MAX search, as described in \eqref{eq:change point detection criterion}. Consequently, the number of iterations \( t \) can be rewritten as \( t = \log_s \left(\frac{K}{a}\right) \). With this parameterization, the PAC bound presented in \eqref{eq:pac cpd lower bound} can be reformulated as follows
\begin{align}\label{eq:pac bnd uncrtnty resol}
	\mathbb{P}\left[ k^* \in \left(k_{\hat{i}-1}^{(t)}, k_{\hat{i}}^{(t)}\right]\right] \ge 1 - 24 (s-2) \log_s \left(\frac{K}{a}\right) \delta.
\end{align}
From \eqref{eq:pac bnd uncrtnty resol}, it is evident that for a fixed number of segments \( s \) and a fixed uncertainty bound \( \delta \) (which, it can be shown, corresponds to a fixed number of training samples \( N \)), the lower bound for the probability of correctly localizing \( k^* \) decreases as the resolution parameter \( a \) becomes smaller, and vice versa. This suggests that achieving precise localization of \( k^* \) becomes more difficult at finer search resolutions compared to coarser ones.

Based on the Glivenko-Cantelli assumption, the bound uncertainty \(\delta\) in \eqref{eq:pac cpd lower bound} can be made arbitrarily small by increasing the training size \(N\). It can be shown that for ridge regression classification with a squared error loss function, \( \delta \) decreases exponentially with respect to \( N \) \cite{li2021towards}.

From \eqref{eq:emp sorting}, our proposed feature selection is based on ranking the regression weights. Since this ranking is derived from minimizing the empirical risk, misidentifying valuable features is possible. Furthermore, there may be correlations among the selected features, which can lead to the inclusion of redundant features. 
To identify these remaining redundant features, we examine the relationship between each feature and the label individually and filter out redundant features based on this analysis. This process will be addressed in the next section.

\section{Further Pruning via Fisher Test}\label{sec:Further pruning via Fisher Test}
The relevance of a feature to the labels can be quantified using mutual information, which measures their dependence. However, calculating the mutual information is often computationally demanding. Fisher test  \cite{fisher1970statistical}, \cite{feller1991introduction}, can be utilized as a more computationally efficient alternative. This test evaluates feature importance based on the ratio of the variance within classes to the variance between classes, providing a simpler yet effective measure of a feature's discriminative power.

Let \(\mathcal{N}_j := \{ n \in [N] \mid y_n = j \}\), with \(j \in [J]\), denote the set of training data indices whose labels belong to the \(j\)-th class. The cardinality of this set, \(N_j := \mathrm{card}(\mathcal{N}_j)\), is the number of training samples in the \(j\)-th class. Let \(\bar{\phi}_k^{(j)}\) represent the sample mean of the \(k\)-th feature for the \(j\)-th class, defined as \(\bar{\phi}_k^{(j)} := \frac{1}{N_j} \sum_{n\in \mathcal{N}_j} \phi(\boldsymbol{x}_n, \boldsymbol{\theta}_k)\). Additionally, let \(\bar{\phi}_k\) denote the mean of the \(k\)-th feature across all \(N\) training samples, defined as \(\bar{\phi}_k := \frac{1}{N} \sum_{n=1}^N \phi(\boldsymbol{x}_n, \boldsymbol{\theta}_k)\). We define \(v_1(k)\) as the inter-class variability of the \(k\)-th feature, given by \(v_1(k) = \sum_{j=1}^J N_j \left(\bar{\phi}_k^{(j)} - \bar{\phi}_k\right)^2\), and \(v_2(k)\) as the intra-class variability, given by \(v_2(k) = \sum_{j=1}^J\sum_{n\in \mathcal{N}_j} \left(\phi(\boldsymbol{x}_n, \boldsymbol{\theta}_k) - \bar{\phi}_k^{(j)}\right)^2\).

Intuitively, when the between-class variability  \(v_1(k)\) of a feature is large relative to its within-class variability \(v_2(k)\), the  $k$-th feature is likely to play a significant role in predicting the label. In contrast, features exhibiting low between-class variability tend to be less informative. Based on this principle, the Fisher test score for the $k$-th feature, denoted \(c_F(k)\), is defined in \cite{keshavarzian2024hiervar,fisher1921studies,fisher1970statistical}
\begin{align}\label{eq: fisher score}
	c_F(k) = \frac{(N - J)v_1(k)}{(J - 1)v_2(k)}.
\end{align}
Let \(\mathcal{J}_1\) be the index set of features retained by the SEED-MAX algorithm. We then retrain a subset of these SEED-MAX features whose Fisher scores exceed a specified threshold \(\tau\). Thus, the features retained after applying the Fisher test, i.e., those satisfying \( c_F(k) \geq \tau \), are identified as follows
\begin{align}\label{eq:fisher test}
	\mathcal{J}_2 = \left\{ k\in \mathcal{J}_1 \middle| c_F(k) \ge \tau \right\}. 
\end{align}
We refer to the feature selection method described in \eqref{eq:fisher test}, which is derived through a cascade application of SEED-MAX and the \emph{Fisher test (F-Test)}, as the \emph{SEED-MAX-F-Test} algorithm. 
Similar to the approach used in \cite{keshavarzian2024hiervar}, we define the threshold \(\tau\) as \(\tau = c_1 \mu\), where \(c_1 > 0\) is a universal constant (For the computation of suitable values of \( c_1 \), see Section \ref{sec:exp results}) and \(\mu\) represents the mean Fisher score of the features in the index set \(\mathcal{J}_1\), computed as follows
\begin{align}\label{eq:mu}
	\mu:= \frac{1}{\text{card}(\mathcal{J}_1)}\sum_{k \in \mathcal{J}_1} c_F(k).
\end{align}

In the next section, we present a numerical comparison of SEED-MAX and SEED-MAX-F-Test against several state-of-the-art feature-selection methods.

\section{Experimental Results}\label{sec:exp results}
In this section, we verify the performance of the analytical lower bound in \eqref{eq:pac cpd lower bound} from Theorem \ref{thm:pac lower bound}. We then numerically evaluate the performance of the SEED-MAX and SEED-MAX-F-Test algorithms on real public time-series datasets and compare them with other state-of-the-art methods\footnote{Additional comparisons of SEED-MAX and SEED-MAX-F-Test with state-of-the-art algorithms on synthetic autoregressive datasets are provided in Section 2 of the Supplementary Materials.}.

\subsection{Evaluation of the Analytical Lower Bound}
First, we examine the behaviour of the lower bound in \eqref{eq:pac cpd lower bound} for the probability that the most parsimonious model index $k^*$ lies within the estimated interval by SEED-MAX. To this end, we construct a dataset for a multiple-output regression task. This setting constitutes a special case of multi-task learning \cite{gillberg2016multiple}, with practical applications including computer emulation of mechanical systems \cite{fricker2011probabilistic}, prediction based on multivariate time series in sensor networks \cite{osborne2008towards}, and modeling interactions between multiple genes and a transcription factor protein \cite{lawrence2006modelling} (see Section 7 of \cite{alvarez2012kernels} for further discussion).  Accordingly, we generate a synthetic linear dataset as follows
\begin{equation}\label{eq:linear synthetic model}
	\boldsymbol{y} = \boldsymbol{W} \boldsymbol{x} + \boldsymbol{e},
\end{equation}
where $\boldsymbol{x} \in \mathbb{R}^{ K}$ is the input data, generated i.i.d. from a uniform distribution over the interval $[-1,1]$. The weight matrix $\boldsymbol{W}\in \mathbb{R}^{J \times K}$ is a sparse matrix, with only a subset by size $k^*$ of its column components being nonzero. The nonzero entries of $\boldsymbol{W}$ are generated i.i.d. from the normal distribution $\mathcal{N}(0,1)$ and are then fixed for each trial of the experiment.  The vector $\boldsymbol{e}\in \mathbb{R}^{J}$ represents the system error, whose entries are generated i.i.d. from the normal distribution $\mathcal{N}(0,\sigma^2)$, where $\sigma^2$ denotes the error variance. The vector $\boldsymbol{y}\in\mathbb{R}^{ J}$ denotes the noisy label vectors. 
We then apply SEED-MAX to the synthetic dataset $\mathcal{D}:=\{(\boldsymbol{x}_n,\boldsymbol{y}_n)\}_{n=1}^N$ constructed according to \eqref{eq:linear synthetic model}, and numerically compute the left-hand side of  \eqref{eq:pac cpd lower bound}.  

To examine the lower bound on the right-hand side of \eqref{eq:pac cpd lower bound}, it is necessary to establish an explicit relationship between the uncertainty parameter $\delta$ and the training dataset size $N$. To this end, we invoke the relation \eqref{eq:gc for seeds} from the proof of Theorem \ref{thm:pac lower bound}. Specifically, for every $\epsilon > 0$ and every $\delta \in (0,\frac{1}{12}]$, there exists a positive integer $N_0$ such that, for all $N\ge N_0$,  the relation \eqref{eq:gc for seeds} holds.

Since the underlying data-generating mechanism is known for the synthetic dataset, the left-hand side of \eqref{eq:gc for seeds} can be evaluated numerically via simulation for any choice of $\epsilon$. Moreover, $\epsilon$ is an arbitrary parameter. For a given value of $\epsilon$, we numerically estimate the probability $\mathbb{P}[| \Delta^2 \hat{R}_k - \Delta^2 R_k| \ge \epsilon]$. Given a training dataset of size $N$ and a specific choice of $\epsilon$, the parameter $\delta$ can then be set equal to its lower bound in \eqref{eq:gc for seeds}, namely $\frac{\mathbb{P}[| \Delta^2 \hat{R}_k - \Delta^2 R_k| > \epsilon]}{12}$. This value of $\delta$ is subsequently substituted into the lower bound on the right-hand side of \eqref{eq:pac cpd lower bound}. Repeating this procedure for different dataset sizes yields a numerical estimate of the lower bound, which can then be compared with the exact accuracy of estimating the size of the most parsimonious feature set.

The comparison between the numerical performance of SEED-MAX in identifying the most parsimonious model and the analytical lower bound in \eqref{eq:pac cpd lower bound} is illustrated in Fig. \ref{fig:sdmax analyitic lower}. These results were obtained with $K = 100$, $J = 10$, $s = 20$, $t = 3$, $k^* = 15$, and $\sigma = 0.1$. For each training dataset size, the experiment was repeated $9000$ times, and the average results are reported in Fig. \ref{fig:sdmax analyitic lower}. As observed, different choices of $\epsilon$ produce different lower bounds. However, when $\epsilon = 0.059$, the lower bound becomes relatively tight. This observation suggests that there exist suitable values of $\epsilon$ that can substantially tighten the lower bound and that such values can be identified in practice. Moreover, the lower bounds corresponding to other values of $\epsilon$ also increase with the training dataset size $N$. The results suggest that, as $N$ becomes sufficiently large, the differences among the lower bounds obtained for different choices of $\epsilon$ may become negligible. Both the exact estimation accuracy of $k^*$ and the corresponding lower bound (for $\epsilon = 0.059$) exhibit limited sensitivity to the training dataset size, and the lower bound closely tracks the behavior of the exact estimation accuracy. This synthetic experiment further demonstrates that SEED-MAX is an effective method for multiple-output regression tasks.

\begin{figure}
	\centering
	\includegraphics[width=0.6\linewidth]{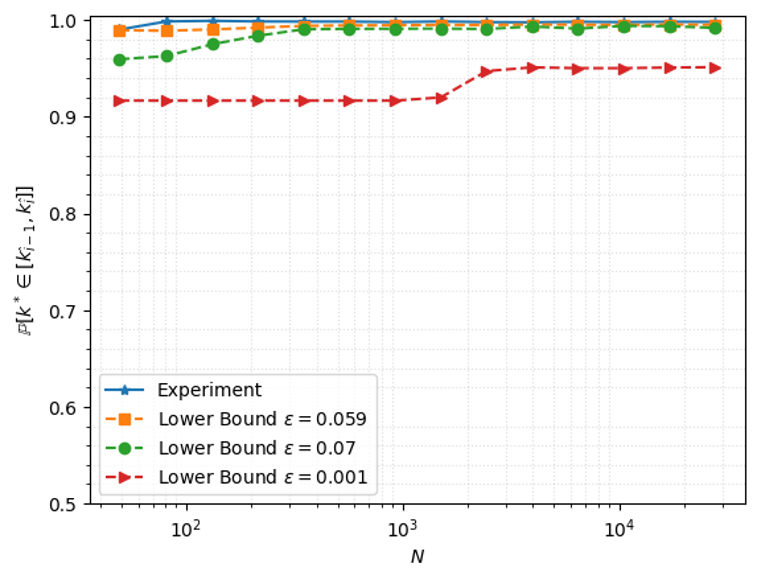}
	\caption{Comparing empirical accuracy of estimating the most parsimonious set of features with the lower bound of Theorem \ref{thm:pac lower bound}.}\label{fig:sdmax analyitic lower}
\end{figure}

\subsection{Evaluation over Real Datasets}
We leverage a specialized W-RKS feature extraction approach, \emph{minimally random convolutional kernel transform} (MINIROCKET) \cite{dempster2021minirocket}, designed specifically for time-series data. The SEED-MAX and SEED-MAX-F-Test algorithms are then applied to the extracted MINIROCKET features to eliminate redundancy. These methods are evaluated in comparison with other state-of-the-art feature selection techniques. We validate the proposed approach on 107 benchmark datasets from the UCR time-series classification archive\footnote{The UCR 2018 archive originally comprised 128 datasets. However, some datasets were excluded due to very low sample sizes per class (i.e., fewer than 10 input vectors per class). This refinement was undertaken to enhance the reliability and validity of the performance results computed over the UCR 2018 datasets.} \cite{dau2019ucr}.

The performance of each feature selection method for a given training dataset of size \( N \) is evaluated using three metrics: the probability of correct classification, \( P_{CC}(N) \), the rate of pruning, \( R_P(N) \), and the relative inference time\footnote{Here, $N$ is omitted from $T_I$, as $T_I$ is applied to the entire test set, which has a fixed size $N$.} $T_I$. The rate of pruning is defined as \( R_P(N) := 1 - \frac{\hat{k}(N)}{K} \), where \(\hat{k}(N)\) denotes the number of features retained by the feature selection algorithm with $N$ samples for training.  The relative inference time\footnote{Inference times are measured on an Intel Core i7 CPU without parallelization.} is defined as $T_I := \frac{T_{\hat{k}}}{T_K}$, where $T_{\hat{k}}$ is the inference time of the pruned model, and $T_K$ is the inference time of MINIROCKET.
Furthermore, for a set of training dataset sizes \( \mathcal{M} \subseteq \mathbb{N} \), the sequence \( \{(R_P(N), P_{CC}(N))\}_{N\in \mathcal{M}} \) is referred to as the \emph{performance trajectory} of the applied feature selection method.  
When the training sample size \( N \) is fixed, these metrics are denoted simply as \( P_{CC} \) and \( R_P \), respectively.

The hyperparameter settings for the MINIROCKET, SEED-MAX, and SEED-MAX-F-Test algorithms are outlined below.
Initially, the MINIROCKET random features are generated using the default configuration specified in \cite{dempster2021minirocket}, resulting $K =9996$ random features. 

For feature pruning using the SEED-MAX method, the parameter \( p \) in the sorting defined in \eqref{eq:emp sorting} is set to infinity, indicating the use of the maximum norm. Based on our empirical observations, this choice yields the best empirical performance.
Additionally, the depth parameter \( t \) and the number of segments \( s \) in the discrete search space are set to \( t = 3 \) and \( s = 10 \), respectively. This configuration results in a resolution parameter \( a \simeq 10 \), satisfying \( a \ll K \).

Additionally, consistent with the empirical findings in \cite{keshavarzian2024hiervar}, the hyperparameter of the F-Test in SEED-MAX-F-Test is set to \( c_1 = 0.5 \). All performance results presented are averaged over five trials of random feature extraction and pruning to ensure robustness and consistency.
In what follows, we present the experimental results over (practical) time-series classification datasets. We evaluated the SEED-MAX and SEED-MAX-F-Test methods on the UCR \cite{dau2019ucr} archive of time-series classification datasets.

Table \ref{tab:ucr-main} presents a comparison between SEED-MAX, SEED-MAX-F-Test, POCKET \cite{chen2024pocket}, Detach-ROCKET \cite{Uribarri2024DetachROCKET}, HIERVAR \cite{keshavarzian2024hiervar}, and F-Test \cite{fisher1921studies,fisher1970statistical,keshavarzian2024hiervar}. Across 107 UCR datasets, SEED-MAX-F-Test significantly reduces the number of retained features to 802, corresponding to a 92\% rate of pruning, with only a minimal classification accuracy degradation of approximately 1\%. The pruning rate of HIERVER and Detach-ROCKET are both lower than SEED-MAX-F-Test. Moreover, the classification accuracy of SEED-MAX-F-Test is 2 percent better than Detach-ROCKET. The inference time of SEED-MAX-F-Test is 6\% of the inference time of MINIROCKET. As a result, SEED-MAX-F-Test has the lowest inference time among all other state-of-the-art methods.

In the third column of Table \ref{tab:ucr-main}, the test accuracy and the number of retained features are presented for 13 out of 107 clean UCR datasets with ten or more classes. Notably, SEED-MAX-F-Test achieves the highest rate of pruning with minimal degradation in classification accuracy. Columns 4 to 6 of Table \ref{tab:ucr-main} detail the results for datasets with 10, 12, and 37 classes, respectively. As observed, the advantage of SEED-MAX-F-Test in achieving a high rate of pruning becomes increasingly pronounced as the number of classes grows, significantly outperforming other state-of-the-art methods presented in Table \ref{tab:ucr-main}.

In the fifth row of Table \ref{tab:ucr-main}, we present an ablation study evaluating the performance of \emph{Fisher’s test} (F-Test) when used as a standalone feature selection method. In this case, features are pruned using the F-score threshold proposed in \cite{keshavarzian2024hiervar}. As shown in Table \ref{tab:ucr-main}, Fisher’s test alone achieves relatively low pruning rates on the UCR datasets (at most 42 percent) and, consequently, does not constitute a state-of-the-art feature selection method when used in isolation.

\begin{table*}
	\caption{Empirical results over 107 cleaned UCR \cite{dau2019ucr} time-series classification datasets, evaluated using the probability of correct classification $(P_{CC})$, the rate of pruning $(R_P)$, and the relative inference time $(T_I)$.}
	\label{tab:ucr-main}
	\centering
	\tiny
	\setlength{\tabcolsep}{3pt}
	\begin{tabularx}{\textwidth}{|
			l|
			Y|Y|Y|
			Y|Y|Y|
			Y|Y|Y|
			Y|Y|Y|
			Y|Y|Y|}
		\hline
		
		\textbf{Method}
		& \multicolumn{3}{c|}{\textbf{UCR (107)}}
		& \multicolumn{3}{c|}{\textbf{UCR ($\ge 10$ classes, 13)}}
		& \multicolumn{3}{c|}{\textbf{UCR (10 classes, 6)}}
		& \multicolumn{3}{c|}{\textbf{UCR (12 classes, 5)}}
		& \multicolumn{3}{c|}{\textbf{UCR (37 classes, 1)}} \\
		
		\cline{2-16}
		
		& \scalebox{0.9}{$P_{CC}(\%)$} &$R_P(\%)$ & $T_I$
		& \scalebox{0.9}{$P_{CC}(\%)$} &$R_P(\%)$ & $T_I$
		&\scalebox{0.9}{$P_{CC}(\%)$} &$R_P(\%)$ & $T_I$
		&\scalebox{0.9}{$P_{CC}(\%)$} & $R_P(\%)$ & $T_I$
		& \scalebox{0.9}{$P_{CC}(\%)$} & $R_P(\%)$ & $T_I$ \\
		\hline
		
		\textbf{MINIROCKET}
		& \textbf{86} & 0 & 1
		& \textbf{85} & 0 & 1
		& \textbf{82} & 0 & 1
		& \textbf{72} & 0 & 1
		& 82 & 0  & 1\\
		\hline
		
		\textbf{POCKET}
		& 85 & 65 & 0.48
		& 82 & 57 & 0.57
		& \textbf{82} & 62 &0.35
		& \textbf{72} & 39 & 0.56
		& 82 & 65 & 0.29 \\
		\hline
		
		\textbf{Detach-ROCKET}
		& 83 & 88 & 0.11
		& 84 & 85 & 0.13
		& 80 & 78 & 0.22
		& 69 & 90 & 0.09
		& 81 & 82 & 0.13 \\
		\hline
		
		\textbf{HIERVAR}
		& 85 & 91 & 0.16
		& \textbf{85} & 74 & 0.24
		& \textbf{82} & 82 & 0.14
		& 71 & 79 & 0.18
		& \textbf{83} & 59 & 0.35 \\
		\hline
		
		\textbf{F-Test}
		& 85 & 32 & 0.9
		& \textbf{85} & 26 & 0.99
		& \textbf{82} & 29 & 0.7
		& \textbf{72} & 42 & 0.56
		& 82 & 20 & 0.8 \\
		\hline
		
		\textbf{SEED-MAX}
		& 85 & 86 & 0.11
		& 84 & 88 & 0.1
		& 80 & 88 & 0.09
		& 69 & 88 & 0.11
		& 82 & 88  & 0.08\\
		\hline
		
		\textbf{SEED-MAX-F-Test}
		& 85 & \textbf{92} & \textbf{0.06}
		& 84 & \textbf{91} & \textbf{0.07}
		& 81 & \textbf{91} & \textbf{0.06}
		& 70 & \textbf{92} & \textbf{0.07}
		& \textbf{83} & \textbf{90} & \textbf{0.07}\\
		\hline
		
	\end{tabularx}
%\begin{minipage}{\linewidth}
%			\textbf{Note:}
%			The numbers in parentheses in columns 2 to 6 indicates the number of datasets in that group.
%		\end{minipage}
\end{table*}

In Fig. \ref{fig:ucr perf plane all data}, the performance of various feature selection methods across all cleaned UCR datasets with \(J\)-classes has been compared with each other. Subfigures \ref{fig:subfig-2class}, \ref{fig:subfig-4class}, and \ref{fig:subfig-8class} correspond to cases in which the number of classes \(J\) equals 2, 4, and 8, respectively. Subfigure \ref{fig:subfig-12class}, illustrates the performance results aggregated over all datasets with \(J \ge 10\) classes. As the number of classes increases, the SEED-MAX-F-Test consistently achieves the highest rate of pruning without incurring significant degradation in classification accuracy.

In Fig. \ref{fig:ucr perf plane specific}, the performance trajectory of various feature selection methods has been illustrated on two specific UCR datasets: the FordA dataset with two classes and the MelbournePedestrian dataset with 10 classes. 
As observed, the classification accuracies of SEED-MAX and SEED-MAX-F-Test are comparable to those achieved by MINIROCKET. 
For datasets with a larger number of classes, such as MelbournePedestrian, the rate of pruning of SEED-MAX-F-Test surpass those of all other state-of-the-art methods, without observable degradation in performance (in terms of the probability of correct classification). %Additionally, applying the F-Test feature selection algorithm to the features retained by SEED-MAX further enhances the rate of pruning.
% Wide figure across two columns
\begin{figure}
	\centering
	
	% ---------------- First row ----------------
	\begin{subfigure}{0.48\textwidth}
		\centering
		\includegraphics[width=0.8\linewidth, height = 6 cm]{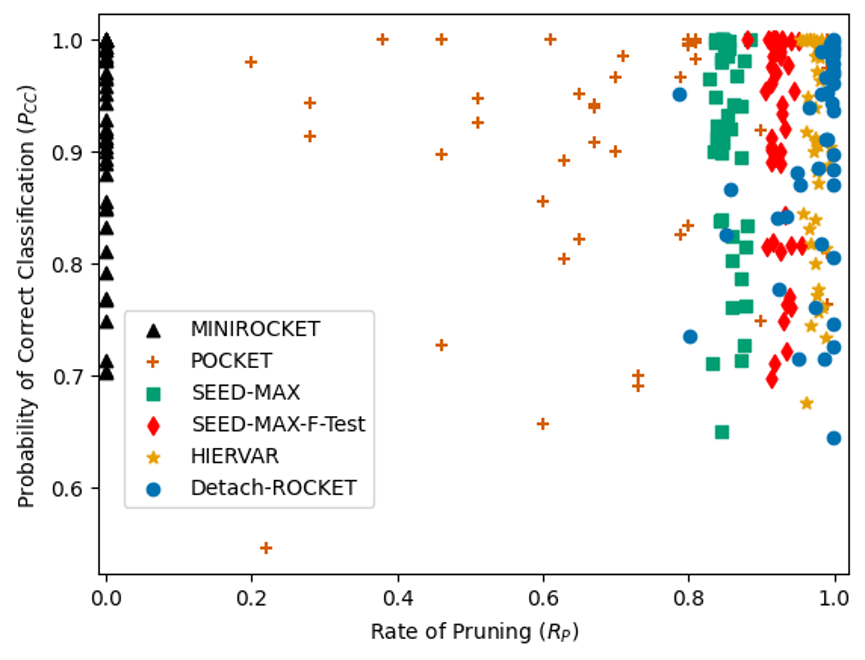}
		\caption{2-Class datasets}
		\label{fig:subfig-2class}
	\end{subfigure}
	\hfill
	\begin{subfigure}{0.48\textwidth}
		\centering
		\includegraphics[width=0.8\linewidth, height = 6 cm]{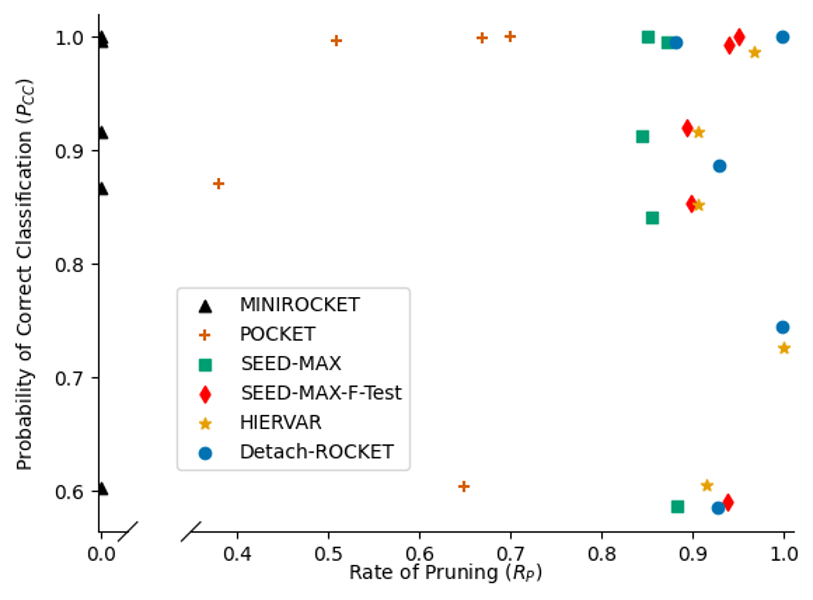}
		\caption{4-Class datasets}
		\label{fig:subfig-4class}
	\end{subfigure}
	
	\vspace{0.3cm}
	
	% ---------------- Second row ----------------
	\begin{subfigure}{0.48\textwidth}
		\centering
		\includegraphics[width=0.8\linewidth, height = 6 cm]{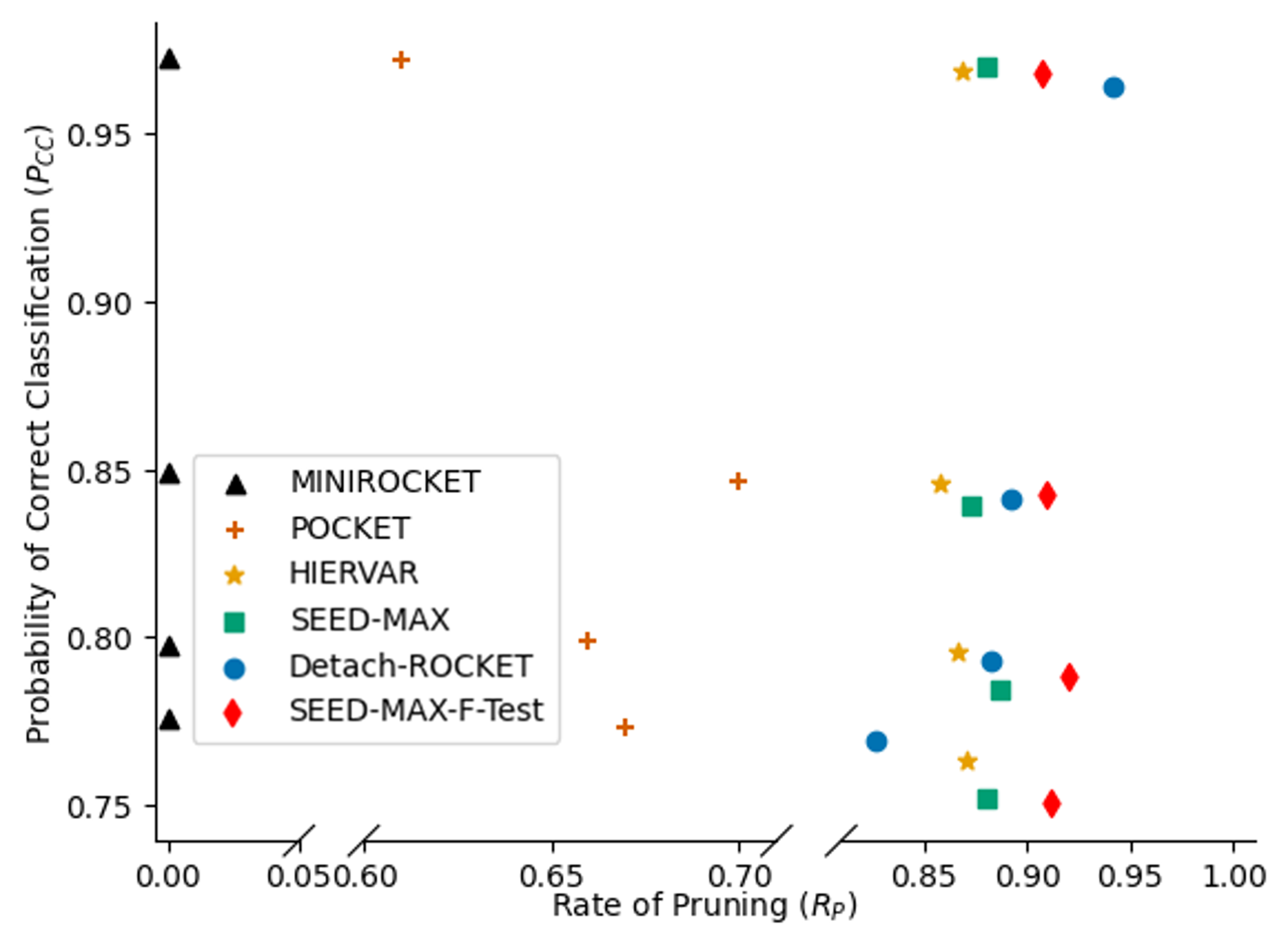}
		\caption{8-Class datasets}
		\label{fig:subfig-8class}
	\end{subfigure}
	\hfill
	\begin{subfigure}{0.48\textwidth}
		\centering
		\includegraphics[width=0.8\linewidth, height = 6 cm]{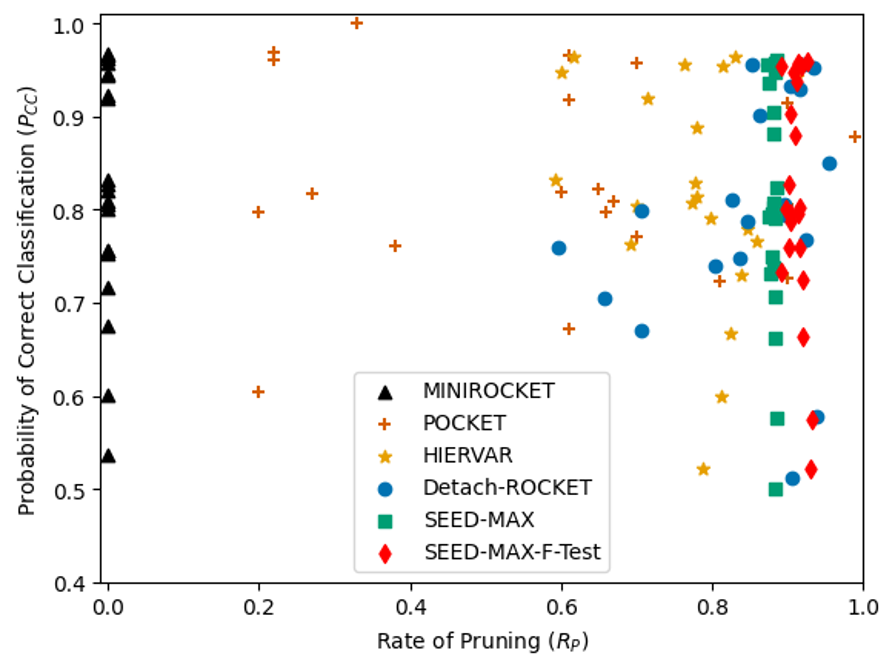}
		\caption{No. of classes: $\geq 10$}
		\label{fig:subfig-12class}
	\end{subfigure}
	
	\caption{Performance comparison of feature selection methods based on the probability of correct classification $P_{CC}$ versus the rate of pruning $R_p$ on UCR datasets with varying class counts.}
	\label{fig:ucr perf plane all data}
\end{figure}

\begin{figure}
	\centering
	\begin{subfigure}{0.4\textwidth}
		\centering
		
		\includegraphics[width=1\linewidth]{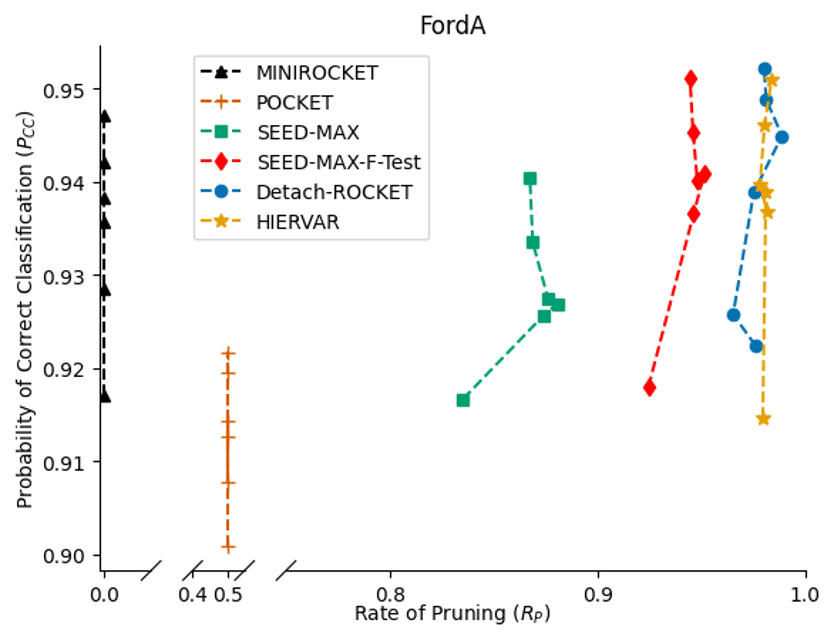}
		\caption{Real 2-Class data (FordA)}
	\end{subfigure}
	%\hfil
	\begin{subfigure}{0.4\textwidth}
		\centering

		\includegraphics[width=1\linewidth]{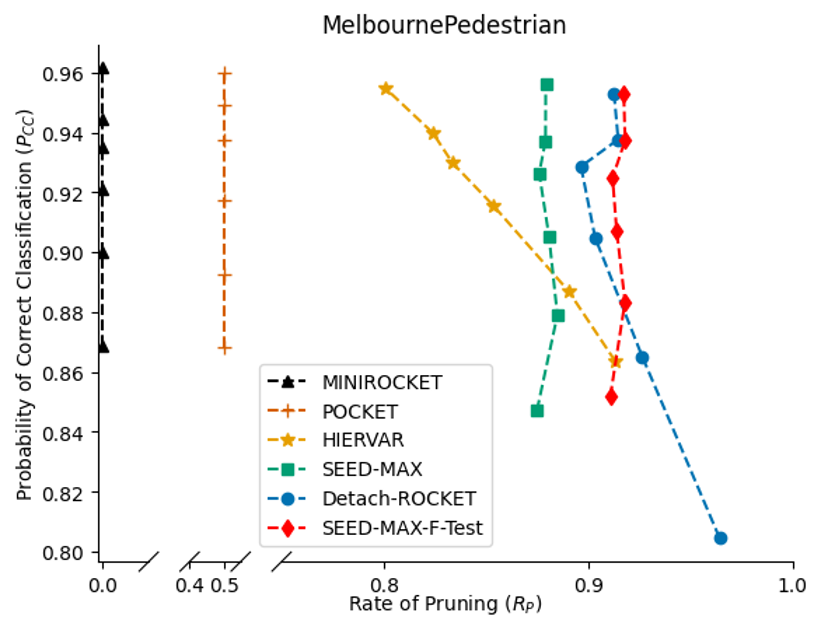}
		\caption{{Real 10-Class data (Melbourne Pedestrian).}}
	\end{subfigure}
	\caption{Performance trajectories of various feature selection methods on (a) FordA (in UCR archive) as a 2-Class dataset, over the set of different training sample sizes $\mathcal{M} =\{ 134,  258,  496,  954, 1830, 3508 \}$, and (b) MelbournePedestrian (in UCR archive) as a 10-Class dataset, over the set of different training sample sizes $\mathcal{M} =\{ 80,  130,  230,  400,  670, 1140\}$.  
	}\label{fig:ucr perf plane specific}
\end{figure}

\subsubsection{Critical Difference Diagram}\label{sec:cd diag}
In Table \ref{tab:ucr-main}, we report the average performance of different feature-selection methods across various subsets of the UCR datasets. However, the extent to which classification accuracies obtained from different domains are directly comparable, and whether averaging such accuracies across domains yields meaningful conclusions, remains a subject of debate in the machine learning literature \cite{webb2000multiboosting}. 
To address this, the Wilcoxon signed-rank test \cite{wilcoxon1945individual, demsar2006statistical} is recommended for reliably comparing multiple methods over diverse datasets. As the name implies, this test ranks the performance of each method on each dataset and then computes the mean rank across all datasets. It also determines whether the observed ranking constitutes a significant difference between methods. The results of this test are typically visualized using a \emph{critical difference diagram} \cite{demsar2006statistical}. We employ the Python implementation\footnote{\url{https://github.com/hfawaz/cd-diagram}} of the critical difference diagram described in \cite{IsmailFawaz2018deep} to compare state-of-the-art feature selection methods.

Fig. \ref{fig:cd-diag} compares the performance of five state-of-the-art feature selection methods against MINIROCKET, the baseline feature extraction method. In each subfigure, the methods are ranked from 1 to 6 according to the performance metric, either the probability of correct classification ($P_{CC}$) or the rate of pruning  ($R_P$). The average rank of each method appears next to its name and is marked on the horizontal axis; ranks closer to 1 indicate better performance. Thick horizontal lines connecting a group of methods on the axis signify that their ranking differences are statistically insignificant. With this context, we now interpret the graphs in Fig. \ref{fig:cd-diag}.

Fig. \ref{fig:cd-diag-pcc2} presents the performance of different feature selection methods on a subset of UCR datasets with binary classes, using probability of correct classification ($P_{CC}$) as the metric. The best rank (highest $P_{CC}$) belongs to MINIROCKET, while Detach-ROCKET receives the worst rank. However, the connecting thick horizontal lines indicate that these ranking differences are statistically insignificant, implying that all methods perform similarly in terms of $P_{CC}$. Fig. \ref{fig:cd-diag-rp2} shows the critical difference diagram for the rate of pruning ($R_P$) on the same binary-class subset. Here, the rankings differ significantly, as no thick horizontal line connects the methods. On binary UCR data, Detach-ROCKET and HIERVAR achieve higher rates of pruning than SEED-MAX and SEED-MAX-F-Test. The situation changes for multi-class datasets (subsets of UCR with at least 10 classes), as shown in Fig. \ref{fig:cd-diag-pcc-10} and Fig. \ref{fig:cd-diag-rp-10}. Again, in Fig. \ref{fig:cd-diag-pcc-10}, all methods exhibit comparable $P_{CC}$, but Fig. \ref{fig:cd-diag-rp-10} reveals that SEED-MAX-F-Test attains the best rate of pruning, followed by Detach-ROCKET and SEED-MAX with similar rates. Thus, on multi-class datasets, SEED-MAX-F-Test achieves a significantly higher rate of pruning without any significant degradation in $P_{CC}$.
\begin{figure}
	\centering
	\begin{subfigure}[b]{0.49\textwidth}
		\centering
		\includegraphics[width=\textwidth]{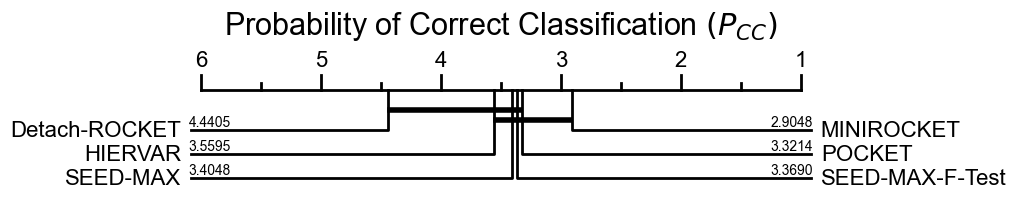}
		\caption{Ranking criterion: $P_{CC}$, No. of classes: 2.}
		\label{fig:cd-diag-pcc2}
	\end{subfigure}
	\hfill
	\begin{subfigure}[b]{0.49\textwidth}
		\centering
		\includegraphics[width=\textwidth]{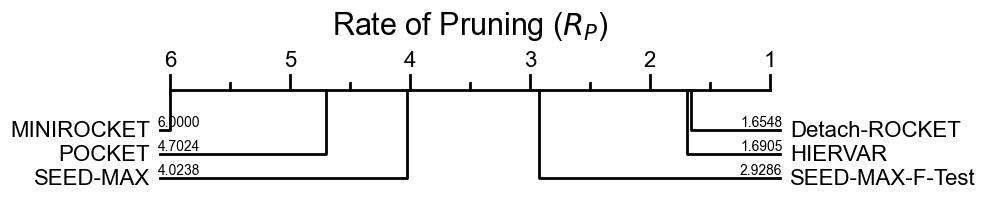}
		\caption{Ranking criterion: $R_P$, No. of classes: 2.}
		\label{fig:cd-diag-rp2}
	\end{subfigure}
	
	\vspace{1em} % Vertical space between rows
	
	\begin{subfigure}[b]{0.49\textwidth}
		\centering
		\includegraphics[width=\textwidth]{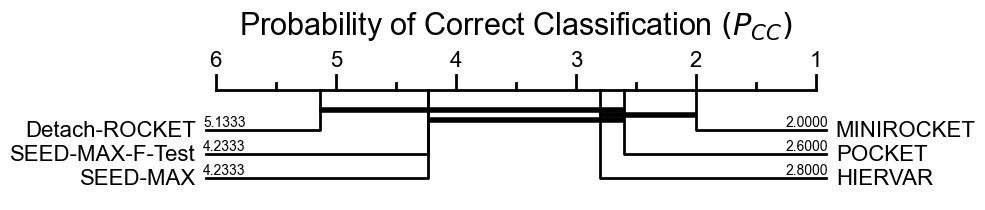}
		\caption{Ranking criterion: $P_{CC}$, No. of classes: $\ge 10$.}
		\label{fig:cd-diag-pcc-10}
	\end{subfigure}
	\hfill
	\begin{subfigure}[b]{0.49\textwidth}
		\centering
		\includegraphics[width=\textwidth]{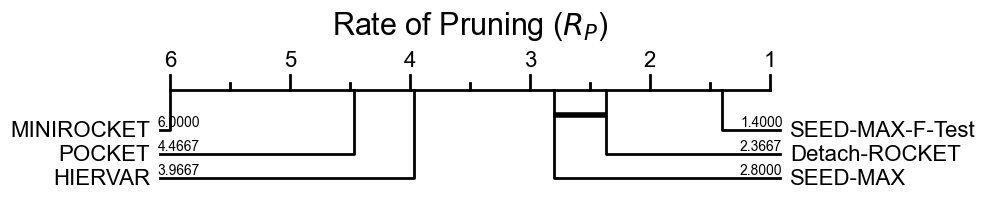}
		\caption{Ranking criterion: $R_P$, No. of classes: $\ge 10$.}
		\label{fig:cd-diag-rp-10}
	\end{subfigure}
	
	\caption{Critical difference diagrams for different feature selection methods. (a) Ranking based on $P_{CC}$ over 2-Class UCR Datasets. (b) Ranking based on $R_{P}$ over 2-Class UCR Datasets. (c) Ranking based on $P_{CC}$ over UCR Datasets with more than 10 classes. (d) Ranking based on $R_{P}$ over  UCR Datasets with more than 10 classes.}
	\label{fig:cd-diag}
\end{figure}

\section{Conclusion}\label{sec:conclusion}
In this paper, we proposed the SEED-MAX and SEED-MAX-F-Test feature selection methods for pruning redundant features generated by W-RKS. We derived a lower bound for the probability of correct feature selection by SEED-MAX and demonstrated theoretically that this bound can be made arbitrarily close to 1, albeit at the expense of increasing the training data size. Empirical evaluations conducted on UCR datasets showed that the SEED-MAX method, when combined with F-Test post-processing, achieved approximately 92\% feature pruning with less than 1\% degradation in classification accuracy. The rate of pruning advantage of our proposed methods becomes especially evident in multi-class classification problems involving many classes, where SEED-MAX-F-Test demonstrates superior performance compared to state-of-the-art approaches.

\appendix
\section{Proof of Theorem 2}\label{sec:main-thm-proof}
Before proceeding with the proof of Theorem \ref{thm:pac lower bound}, we first introduce and prove Lemma \ref{lem:seeds optimality for risks}, Lemma \ref{lem: gc for seeds}, Proposition \ref{prop:seeds tail}, and Proposition \ref{prop:sl loc garantee}.
We recall that the risk minimizer \( \boldsymbol{g}^*_{k} \) for the \( k \)-th family \( \mathcal{G}_k \) within the sequentially nested class \( \{ \mathcal{G}_k \}_{k=1}^K \) is defined as follows
\begin{align}\label{eq:risk minimizer reminder}
	\boldsymbol{g}^*_k = \operatorname{argmin}_{\boldsymbol{g}\in \mathcal{G}_k} R[\boldsymbol{g}].
\end{align}
Moreover, let us define first-order risk difference $\Delta R_k$ as follows
\begin{align}\label{eq:seer asymp reminder}
	\Delta R_k := R[\boldsymbol{g}^*_{k}] - R[\boldsymbol{g}^*_{k+1}].
\end{align}
Based on \( \Delta R_k \) in \eqref{eq:seer asymp reminder}, the quantity $\Delta^2R_k$ will be difined as follows
\begin{align}\label{eq:seed asymp reminder}
	\Delta^2 R_k := \Delta R_{k-1} - \Delta R_{k }.
\end{align}
Based on the above definitions, we state the following lemma.
\begin{lemma}\label{lem:seeds optimality for risks}
	Under the assumption that the mapping $\{(k,  \Delta R_k) \}_{k\in [k^* - 1]}$ is strictly discrete midpoint concave\footnote{From \cite{murota2003discrete}, for the strictly midpoint concave mappings it follows that $\Delta R_i + \Delta R_{j} < 2 \Delta R_{\lfloor\frac{i+j}{2}\rfloor}$ for all $i,j \in [k^*-1]$.}, it follows that 
	\begin{align}
		k^* = \operatorname{argmax}_{k\in [K]} \Delta^2R_k,
	\end{align}  
where $k^*$ is the index of the most parsimonious family of functions\footnote{See Theorem \ref{thm:elbow risk} for the formal definition of $k^*$.} among the $K$ sequentially nested families of functions $\{\mathcal{G}_k\}_{k = 1}^K$.
\end{lemma}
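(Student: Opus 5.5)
The plan is a direct computation of $\Delta^2 R_k$ in three regimes: $k>k^*$, $k=k^*$ and $k<k^*$. Theorem \ref{thm:elbow risk} handles the first two regimes and the midpoint-concavity hypothesis handles the third.

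\textbf{The regimes $k \ge k^*$.} By \eqref{eq:risk elbow property} applied to the sequentially nested class $\{\mathcal{G}_k\}_{k=1}^K$, we have $\Delta R_k = R[\boldsymbol{g}^*_k]-R[\boldsymbol{g}^*_{k+1}] \ge 0$ for every $k$. Moreover $\Delta R_k = 0$ for all $k \ge k^*$, and $\Delta R_{k^*-1} > 0$. From \eqref{eq:seed asymp reminder} it follows at once that
\begin{align*}
\Delta^2 R_{k^*} = \Delta R_{k^*-1} - \Delta R_{k^*} = \Delta R_{k^*-1} > 0,
\end{align*}
while $\Delta^2 R_k = 0 - 0 = 0$ for every $k > k^*$. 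So $k^*$ strictly beats every index to its right. At the boundary index $k=K$ I adopt the convention $\Delta R_K := 0$, consistent with $R[\boldsymbol{g}^*_K]=R^*$. Index $1$ is excluded from the search space, as in \eqref{eq:seed-max v1}.

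\textbf{The regime $2 \le k \le k^*-1$.} First I apply the footnoted inequality with $i=k-1$ and $j=k+1$, both in $[k^*-1]$. This gives $\Delta R_{k-1}+\Delta R_{k+1} < 2\Delta R_k$, which rearranges to $\Delta^2 R_k < \Delta^2 R_{k+1}$ for $2 \le k \le k^*-2$. Hence $\Delta^2 R_k$ is strictly increasing on $\{2,\dots,k^*-1\}$, and the maximum over this range is $\Delta^2 R_{k^*-1} = \Delta R_{k^*-2}-\Delta R_{k^*-1}$. It then remains to show $\Delta^2 R_{k^*-1} < \Delta^2 R_{k^*}$, that is,
\begin{align*}
\Delta R_{k^*-2} < 2\,\Delta R_{k^*-1}.
\end{align*}

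\textbf{The main obstacle.} This last inequality is exactly the midpoint-concavity inequality for the triple $(k^*-2,\,k^*-1,\,k^*)$, since $\Delta R_{k^*}=0$. However, the index $k^*$ lies just outside the stated domain $[k^*-1]$. My first attempt will be to argue that the hypothesis is meant for the mapping on $[k^*-1]$ extended by the elbow point $(k^*,\Delta R_{k^*})=(k^*,0)$. That extension is natural because the elbow is precisely where the concave decay of $\Delta R_k$ terminates. Failing that, I would make this extension an explicit part of the assumption.

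Note that the lower bound $\Delta R_{k^*-1}>0$ from \eqref{eq:risk elbow property} by itself does not force $\Delta R_{k^*-2}<2\Delta R_{k^*-1}$. Some such condition is genuinely needed; otherwise a steep drop just before $k^*$ would move the maximizer to $k^*-1$.

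With this inequality in hand, the three regimes combine to give $\Delta^2 R_{k^*} > \Delta^2 R_k$ for all admissible $k \neq k^*$. The argmax is therefore unique and equals $k^*$.
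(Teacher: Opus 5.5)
Your three-regime argument is the paper's proof. Midpoint concavity with $i=k-1$, $j=k+1$ makes $\Delta^2 R_k$ strictly increasing below $k^*$. The elbow property of Theorem~\ref{thm:elbow risk} gives $\Delta^2 R_{k^*}=\Delta R_{k^*-1}>0$ and $\Delta^2 R_k=0$ for $k>k^*$.

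The obstacle you flag is genuine, and it is a defect of the statement, not of your reasoning. The paper asserts $\Delta^2 R_k<\Delta^2 R_{k+1}$ ``for every $k\in[k^*-1]$'' and chains $\Delta^2 R_1<\cdots<\Delta^2 R_{k^*}$. At $k=k^*-1$ this invokes the concavity inequality with $j=k^*\notin[k^*-1]$, which is exactly $\Delta R_{k^*-2}<2\Delta R_{k^*-1}$. At $k=1$ it uses the undefined $\Delta R_0$. So the paper silently makes the extension you propose, and your restriction to $k\ge 2$ is the correct treatment of the other end.

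Your suspicion that the extension cannot be dispensed with is also correct. Take $k^*=4$ and risks $20,10,1,0,0,\dots$, so that $\Delta R_1=10$, $\Delta R_2=9$, $\Delta R_3=1$, and $\Delta R_k=0$ for $k\ge 4$.
\begin{itemize}
\item Every inequality $\Delta R_i+\Delta R_j<2\Delta R_{\lfloor (i+j)/2\rfloor}$ with $i\neq j$ in $\{1,2,3\}$ holds. The case $i=j$ must be excluded under any reading, since $2\Delta R_i<2\Delta R_i$ is never true.
\item Yet $\Delta^2 R_2=1$, $\Delta^2 R_3=8$, and $\Delta^2 R_4=1$, so the maximizer is $k^*-1$.
\end{itemize}

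Hence your ``first attempt'' cannot be turned into a derivation from the stated assumption. Arguing that the hypothesis is \emph{meant} to include the elbow point is a change of assumption, not a proof. The sound route is your fallback: assume strict midpoint concavity on $[k^*]$ with $\Delta R_{k^*}=0$, or directly assume $\Delta R_{k^*-2}<2\Delta R_{k^*-1}$. With that, your argument is complete.
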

\begin{proof}
From the strictly discrete midpoint concavity of \(\{(k,\Delta R_k)\}_{k\in [k^* -1]}\), it follows that for every \( i, j \in [k^* - 1] \), the following holds
	\begin{align}\label{eq:discrete midpoint concavity cond}
		\Delta R_i + \Delta R_{j} < 2 \Delta R_{\lfloor\frac{i+j}{2}\rfloor}
	\end{align}
	Setting \( i = k - 1 \) and \( j = k + 1 \) in \eqref{eq:discrete midpoint concavity cond} implies that, for every \( k \in [k^* - 1] \), the following inequality holds
	\begin{align}\label{eq:concavity prop}
		\Delta^2 R_{k} < \Delta^2 R_{k+1}.
	\end{align}
	By \eqref{eq:concavity prop} and the transitivity of the relation \(<\), it follows that
	\begin{equation}
		\Delta^2 R_{1} < \Delta^2 R_{2} < \cdots < \Delta^2 R_{k^*},
	\end{equation}
	which implies
	\begin{align}\label{eq:discrete midpoint concavity assump}
		\Delta^2 R_{k} < \Delta^2 R_{k^*}, \quad \forall k \in [k^* - 1].
	\end{align}
	Moreover, from \eqref{eq:risk elbow property} and \eqref{eq:seer asymp reminder}, it follows that
	\begin{equation}\label{eq:deltarkm1}
		\Delta R_{k^* - 1} > 0,
	\end{equation}
and 
\begin{equation}\label{eq:deltark0}
	\Delta R_{k} = 0, \quad\forall k \in [K]\setminus[k^* - 1].
\end{equation}
	Consequently, using \eqref{eq:seed asymp reminder}, \eqref{eq:deltarkm1}, and \eqref{eq:deltark0}, we arrive at
	\begin{equation}\label{eq:delta2rkstar}
		\Delta^2 R_{k^*} = \Delta R_{k^* - 1} - \Delta R_{k^*} = \Delta R_{k^* - 1} > 0.
	\end{equation}
Moreover, from \eqref{eq:risk elbow property} and \eqref{eq:seed asymp reminder} it follows that
\begin{equation}\label{eq:zero after elbow}
	\Delta^2 R_{k} = 0, \quad \forall k\in [K]\setminus [k^*].
\end{equation}
	Finally, from \eqref{eq:discrete midpoint concavity assump}, \eqref{eq:delta2rkstar}, and \eqref{eq:zero after elbow} it follows that
	\begin{align}
		k^* = \operatorname{argmax}_{k\in [K]} \Delta^2 R_k,
	\end{align}
	which is the desired result.
	
\end{proof}

\begin{lemma}\label{lem: gc for seeds}
	Consider the sequentially nested class \( \{\mathcal{G}_k\}_{k=1}^K \), where each family of loss functions \(\mathcal{L}_k := \{ l(\boldsymbol{g}(\cdot),\cdot) | \boldsymbol{g} \in \mathcal{G}_k  \}  \), where $k \in [K]$, is Glivenko-Cantelli. Then, for every \( k \in [K] \), there exists a positive integer \( N_k \) such that for all \( N \geq N_k \), there exists a bound \( \epsilon_k(N, \delta) \) of order \( o(1) \), with \( \delta \in \left(0,\frac{1}{12}\right] \), ensuring that the following holds
	\begin{align}\label{eq:gc for seeds}
		\mathbb{P}\left[\left| \Delta^2 \hat{R}_k - \Delta^2 R_k\right| > \epsilon_k(N, \delta)\right] \le 12\delta.
	\end{align}
\end{lemma}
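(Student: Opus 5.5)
The plan is to write $\Delta^2 \hat{R}_k - \Delta^2 R_k$ as a signed sum of three deviations $\hat{R}[\hat{\boldsymbol{g}}_j] - R[\boldsymbol{g}^*_j]$ and control each one with a Glivenko--Cantelli estimate. By definition,
\begin{align*}
\Delta^2 \hat{R}_k - \Delta^2 R_k = \bigl(\hat{R}[\hat{\boldsymbol{g}}_{k-1}] - R[\boldsymbol{g}^*_{k-1}]\bigr) - 2\bigl(\hat{R}[\hat{\boldsymbol{g}}_{k}] - R[\boldsymbol{g}^*_{k}]\bigr) + \bigl(\hat{R}[\hat{\boldsymbol{g}}_{k+1}] - R[\boldsymbol{g}^*_{k+1}]\bigr).
\end{align*}
The triangle inequality and a union bound then reduce the claim to showing the following for each $j\in\{k-1,k,k+1\}$: with probability at least $1-4\delta$, the deviation $|\hat{R}[\hat{\boldsymbol{g}}_j] - R[\boldsymbol{g}^*_j]|$ is at most some $o(1)$ quantity $\eta_j(N,\delta)$. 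Taking $\epsilon_k := \eta_{k-1} + 2\eta_k + \eta_{k+1}$ and summing the three failure probabilities gives $12\delta$. The restriction $\delta\le 1/12$ only ensures the bound is nontrivial. We set $N_k := \max(N_{k-1},N_k,N_{k+1})$, with the obvious convention at the endpoints $k=1,K$.

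The core step is the single-model ERM gap. Write $\hat{R}[\hat{\boldsymbol{g}}_j] - R[\boldsymbol{g}^*_j] = A + B$, where
\begin{align*}
A := \hat{R}[\hat{\boldsymbol{g}}_j] - \hat{R}[\boldsymbol{g}^*_j], \qquad B := \hat{R}[\boldsymbol{g}^*_j] - R[\boldsymbol{g}^*_j].
\end{align*}

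The term $B$ is bounded by $\sup_{\boldsymbol{g}\in\mathcal{G}_j}|\hat{R}[\boldsymbol{g}]-R[\boldsymbol{g}]|$. By \eqref{eq:epsilon-delta bound} applied to $\mathcal{L}_j$, this is at most $\epsilon(N,\delta)$ with probability at least $1-\delta$.

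For $A$, we need bounds in both directions.
\begin{itemize}
\item \emph{Lower bound.} $\hat{R}[\hat{\boldsymbol{g}}_j] \ge \hat{R}[\hat{\boldsymbol{g}}_j]\,$, and by uniform convergence $\hat{R}[\hat{\boldsymbol{g}}_j] \ge R[\hat{\boldsymbol{g}}_j]-\epsilon \ge R[\boldsymbol{g}^*_j]-\epsilon$, since $\boldsymbol{g}^*_j$ is the risk minimizer over $\mathcal{G}_j$.
\item \emph{Upper bound.} Regularized optimality of $\hat{\boldsymbol{g}}_j$ in \eqref{eq:regularized erm 2nd} gives $\hat{R}[\hat{\boldsymbol{g}}_j] \le \hat{R}[\boldsymbol{g}^*_j] + \frac{\lambda}{N}\lVert \boldsymbol{W}^*_j\rVert_F^2$. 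Combining with the bound on $B$ yields $\hat{R}[\hat{\boldsymbol{g}}_j]\le R[\boldsymbol{g}^*_j]+\epsilon+\frac{\lambda}{N}\lVert\boldsymbol{W}^*_j\rVert_F^2$.
\end{itemize}
This uses two GC events per model (or one two-sided event, with the remaining $\delta$'s absorbing slack). Allotting $4\delta$ per model — for instance, one event for the supremum on each side and extra budget for a norm-control event — yields the constant $12$. We then define $\eta_j(N,\delta) := \epsilon(N,\delta) + \frac{\lambda}{N}\lVert \boldsymbol{W}^*_j\rVert_F^2$, which is $o(1)$.

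The main obstacle is the regularization term. It requires that $\boldsymbol{g}^*_j$ exist and have a finite-norm weight matrix, which holds if the minimum in \eqref{eq:risk minimizer reminder} is attained. If it is not attained, I would replace $\boldsymbol{g}^*_j$ by an $o(1)$-approximate minimizer with bounded norm, chosen depending on $N$, so that both the approximation error and $\lambda\lVert\boldsymbol{W}\rVert_F^2/N$ vanish. A second, minor subtlety is that the ordering $\boldsymbol{r}$ is data-dependent, so the families $\mathcal{G}_j$ are random. I would handle this as the statement implicitly does, by assuming the GC property uniformly over $\mathcal{G}_j$ (equivalently, over $\mathcal{F}_j\supseteq\mathcal{G}_j$, which is deterministic given $\boldsymbol{S}$). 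Since $\mathcal{G}_j\subseteq\mathcal{F}_j$, the supremum bounds transfer directly.
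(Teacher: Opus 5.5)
Your proposal is correct. It shares the paper's skeleton: the $1,-2,1$ decomposition of $\Delta^2\hat R_k-\Delta^2R_k$ into the deviations $d_j=\hat R[\hat{\boldsymbol g}_j]-R[\boldsymbol g^*_j]$, $j\in\{k-1,k,k+1\}$, followed by the triangle inequality and a union bound. Where you differ is in how you control each $d_j$.

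\textbf{The paper's route.} It writes $d_j=(\hat R[\hat{\boldsymbol g}_j]-R[\hat{\boldsymbol g}_j])+(R[\hat{\boldsymbol g}_j]-R[\boldsymbol g^*_j])$. It bounds the excess risk by the classical argument $0\le R[\hat{\boldsymbol g}_j]-R[\boldsymbol g^*_j]\le\xi\le 2\epsilon$, ending with $|d_j|\le 3\epsilon$ outside an event of probability $3\delta_j$. That step relies on $\hat R[\boldsymbol g^*_j]\ge\hat R[\hat{\boldsymbol g}_j]$. This holds for an unregularized empirical risk minimizer, but not in general for $\hat{\boldsymbol g}_j$ as actually defined through the regularized problem \eqref{eq:regularized erm 2nd}.

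\textbf{Your route.} Your sandwich $R[\boldsymbol g^*_j]-\epsilon\le\hat R[\hat{\boldsymbol g}_j]\le R[\boldsymbol g^*_j]+\epsilon+\frac{\lambda}{N}\lVert\boldsymbol W^*_j\rVert_F^2$ holds on a single uniform-deviation event per model. It pays for the regularizer explicitly with an $O(1/N)$ term. It is therefore valid for the estimator the paper actually uses, and it is tighter: the total failure probability is $3\delta$. So the $12\delta$ needs no budgeting, and the ``norm-control event'' you mention is superfluous. The only extra requirement is a finite-norm (or approximate) minimizer $\boldsymbol g^*_j$. You flag this, and any correct treatment of the regularized estimator needs it. In exchange, the paper's split, once corrected by the same $\frac{\lambda}{N}\lVert\boldsymbol W^*_j\rVert_F^2$ term, also yields an excess-risk bound for $\hat{\boldsymbol g}_j$ as a by-product; your argument neither needs nor produces this.

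\textbf{Data-dependent ordering.} Your handling of $\boldsymbol r$, taking the supremum over the deterministic $\mathcal F_j\supseteq\mathcal G_j$, addresses a point the paper passes over silently. To keep $\epsilon_k$ deterministic there, bound $\lVert\boldsymbol W^*_j\rVert_F$ by its maximum over the finitely many $j$-element feature subsets.

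\textbf{Minor points.} The displayed line $\hat R[\hat{\boldsymbol g}_j]\ge\hat R[\hat{\boldsymbol g}_j]$ is a harmless typo. Redefining $N_k$ in terms of itself should use a fresh symbol.
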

\begin{proof}
	Based on the definitions of $\Delta^2\hat{R}_{k}$ and $\Delta^2{R}_{k}$ in \eqref{eq:seeds} and \eqref{eq:seed asymp reminder}, respectively, we decompose $\Delta^2\hat{R}_{k} - \Delta^2{R}_{k}$  as
	\begin{equation}\label{eq:delta_2 R diff decompose}
			\Delta^2\hat{R}_{k} - \Delta^2{R}_{k} = d_{k-1} + d_{k+1} - 2 d_{k},
	\end{equation}
where 
\begin{equation}\label{eq:d_k}
	d_{k} := \hat{R}[\hat{\boldsymbol{g}}_{k}] - R[\boldsymbol{g}^*_{k}].
\end{equation}
	Furthermore, the random variable $d_k$ admits the decomposition
	\begin{align}\label{eq:atomic decomposition}
		d_k = \hat{R}[\hat{\boldsymbol{g}}_{k}] - R[\hat{\boldsymbol{g}}_{k}] + R[\hat{\boldsymbol{g}}_{k}] - R[\boldsymbol{g}^*_{k}].
	\end{align}
By the triangle inequality,
\begin{align}
	d_k \le |d_k| \le \left|\hat{R}[\hat{\boldsymbol{g}}_{k}] - R[\hat{\boldsymbol{g}}_{k}] \right| + \left|R[\hat{\boldsymbol{g}}_{k}] - R[\boldsymbol{g}^*_{k}]\right|
\end{align}
	From the Glivenko-Cantelli (GC) assumption, it follows that for every $\delta_k\in(0,1]$, there exist a positive integer $N_k$ and a bound $\epsilon_{k,1}(N,\delta_k)$ such that, for all $N\ge N_k$,
	\begin{align}\label{eq:gc for gk}
		\mathbb{P}\left[\sup_{\boldsymbol{g}\in \mathcal{G}_k}\left|\hat{R}[\boldsymbol{g}] - R[\boldsymbol{g}]\right| > \epsilon_{k,1}(N,\delta_{k})\right] \le \delta_{k}.
	\end{align}
Since $\hat{\boldsymbol{g}}_k,\boldsymbol{g}^*_k\in\mathcal{G}_k$, and 
\begin{equation}
\left| \hat{R}[\boldsymbol{g}] - R[\boldsymbol{g}] \right| \le \sup_{\boldsymbol{g}\in \mathcal{G}} \left| \hat{R}[\boldsymbol{g}] - R[\boldsymbol{g}]\right| 
\end{equation}
 holds for every $\boldsymbol{g}\in \mathcal{G}$, it follows from \eqref{eq:gc for gk} that
\begin{equation}\label{eq:gc g hat}
	\mathbb{P}\left[\left|\hat{R}[\hat{\boldsymbol{g}}_k] - R[\hat{\boldsymbol{g}}_k]\right| > \epsilon_{k,1}(N,\delta_{k})\right] \le \delta_{k},
\end{equation}
and
\begin{equation}\label{eq:gc g star}
	\mathbb{P}\left[ \left| \hat{R}[\boldsymbol{g}^*_k] - R[\boldsymbol{g}^*_k] \right| > \epsilon_{k,1}(N,\delta_{k}) \right] \le \delta_{k}. 
\end{equation}
Define
\begin{equation}\label{eq:xi}
	\xi :=  R[\hat{\boldsymbol{g}}_k] - \hat{R}[\hat{\boldsymbol{g}}_k] + \hat{R}[\boldsymbol{g}^*_k] - R[\boldsymbol{g}^*_k] .
\end{equation}
By the definitions of the population risk minimizer and the empirical risk minimizer, $R[\hat{\boldsymbol{g}}_k] \ge R[\boldsymbol{g}^*_k]$, we have $\hat{R}[\boldsymbol{g}^*_k] \ge \hat{R}[\hat{\boldsymbol{g}}_k]$. Applying these inequalities to \eqref{eq:xi} yields
\begin{equation}\label{eq:gstar best}
	0 \le R[\hat{\boldsymbol{g}}_k] - R[\boldsymbol{g}^*_k] \le \xi.
\end{equation}
Next, applying \eqref{eq:gc g hat} and \eqref{eq:gc g star} to \eqref{eq:xi}, together with the triangle inequality, gives
\begin{equation}\label{eq:merged r}
	\mathbb{P}\left[  \xi   > 2\epsilon_{k,1}(N,\delta_{k}) \right]  \le 2\delta_{k},
\end{equation}
for all $N\ge N_k$.
Using \eqref{eq:gstar best}, we further obtain
\begin{equation}\label{eq:ghat star gap}
	\mathbb{P}\left[  R[\hat{\boldsymbol{g}}_k] - R[\boldsymbol{g}^*_k]  > 2\epsilon_{k,1}(N,\delta_{k}) \right] \le \mathbb{P}\left[  \xi   > 2\epsilon_{k,1}(N,\delta_{k}) \right]  \le  2\delta_{k}.
\end{equation}
Applying \eqref{eq:gc g hat} together with \eqref{eq:ghat star gap}, and noting that $R[\hat{\boldsymbol{g}}_k] - R[\boldsymbol{g}^*_k] \ge 0$ always holds, yields
\begin{equation}\label{eq:erm gap}
	\mathbb{P}\left[\left|d_k \right| > 3 \epsilon_{k,1}(N,\delta_k) \right] \le 3 \delta_{k},
\end{equation}
where $d_k$ is defined in \eqref{eq:d_k}. 
Since \eqref{eq:erm gap} holds for every $k\in[K]$, it can also be applied to $d_{k-1}$ and $d_{k+1}$. Combining these bounds,  together with the decomposition in \eqref{eq:delta_2 R diff decompose}, and applying the triangle inequality,  yields
\begin{equation}\label{eq:seeds gap}
	\mathbb{P}\left[\left| \Delta^2 \hat{R}_k - \Delta^2 R_k\right| > 3 \epsilon_{k+1,1}(N,\delta_{k+1}) + 3 \epsilon_{k-1,1}(N,\delta_{k-1}) + 6 \epsilon_{k,1}(N,\delta_{k})\right] \le 3 (\delta_{k+1} +\delta_{k-1}) + 6\delta_{k}.
\end{equation}
Now define
\begin{equation}\label{eq:delta}
	\delta := \max\{\delta_{k-1}, \delta_{k}, \delta_{k+1}\},
\end{equation}
 and 
 \begin{equation}\label{eq:epsilonk}
 \epsilon_k(N,\delta) := 3 \epsilon_{k+1,1}(N,\delta_{k+1}) + 3 \epsilon_{k-1,1}(N,\delta_{k-1}) + 6 \epsilon_{k,1}(N,\delta_{k}).
 \end{equation}
By applying \eqref{eq:delta} and \eqref{eq:epsilonk} into \eqref{eq:seeds gap}, we obtain
\begin{equation}
	\mathbb{P}\left[\left| \Delta^2 \hat{R}_k - \Delta^2 R_k \right| > \epsilon_k(N,\delta) \right]\le 12 \delta,
\end{equation}
where $\delta \in (0,\frac{1}{12}]$.
\end{proof}

\begin{proposition}\label{prop:seeds tail}
	Let $k_{i^*}$ be the most parsimonious model index among the sorted model indices $\{k_i\}_{i=1}^{s+1}$ defined in \eqref{eq:sorted indices}.
Then, under the assumptions of Lemmas \ref{lem:seeds optimality for risks} and \ref{lem: gc for seeds}, for every $k\in [K]$, $k \neq k_{i^*}$ and every $\delta \in (0, \frac{1}{24}]$, there exists $N_0$ such that for all $N \ge N_0$,
\begin{equation}
	\mathbb{P}\left[ \Delta^2 \hat{R}_{k_{i^*}} < \Delta^2\hat{R}_k \right] \le 24 \delta.
\end{equation}
\end{proposition}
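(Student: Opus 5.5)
The plan is to reduce the comparison of two empirical SEEDS values to the deviation bounds of Lemma \ref{lem: gc for seeds}, using the strict population gap supplied by Lemma \ref{lem:seeds optimality for risks}.

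First, I will apply Lemma \ref{lem:seeds optimality for risks} to the subsampled sequentially nested class $\{\mathcal{G}_{k_i}\}_{i=1}^{s+1}$, which is nested by Proposition \ref{prp:nesting for sorted subsets}. This identifies $k_{i^*}$ as the unique maximizer of the population quantity $\Delta^2 R$. Consequently, for the fixed index $k \neq k_{i^*}$ the gap
\begin{equation*}
\gamma_k := \Delta^2 R_{k_{i^*}} - \Delta^2 R_k
\end{equation*}
is strictly positive. When $k$ lies beyond the elbow, positivity is immediate: there $\Delta^2 R_k = 0$, while $\Delta^2 R_{k_{i^*}} = \Delta R_{k_{i^*-1}} > 0$, exactly as in \eqref{eq:delta2rkstar} and \eqref{eq:zero after elbow}. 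When $k$ lies before $k_{i^*}$, the strict inequality comes from midpoint concavity, as in \eqref{eq:discrete midpoint concavity assump}. Since $\gamma_k$ does not depend on $N$, it serves as a fixed margin.

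Next, I will use the event inclusion
\begin{equation*}
\left\{\Delta^2 \hat{R}_{k_{i^*}} < \Delta^2 \hat{R}_k\right\} \subseteq \left\{\left|\Delta^2 \hat{R}_{k_{i^*}} - \Delta^2 R_{k_{i^*}}\right| \ge \tfrac{\gamma_k}{2}\right\} \cup \left\{\left|\Delta^2 \hat{R}_k - \Delta^2 R_k\right| \ge \tfrac{\gamma_k}{2}\right\}.
\end{equation*}
This holds because if both deviations are below $\gamma_k/2$, then $\Delta^2 \hat{R}_{k_{i^*}} - \Delta^2 \hat{R}_k > \gamma_k - \gamma_k = 0$. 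I will then invoke Lemma \ref{lem: gc for seeds} separately for $k_{i^*}$ and for $k$, with the same $\delta \in (0,\frac{1}{24}] \subset (0,\frac{1}{12}]$. That lemma is stated for consecutive indices, but its proof only uses the decomposition $d_{k-1}+d_{k+1}-2d_k$ together with the per-index bound \eqref{eq:erm gap}. The same argument therefore goes through verbatim with neighbours $k_{i-1}$ and $k_{i+1}$. Each invocation produces a tail probability of at most $12\delta$ at thresholds $\epsilon_{k_{i^*}}(N,\delta)$ and $\epsilon_k(N,\delta)$. A union bound then gives the total $24\delta$.

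The main obstacle is matching the thresholds. Lemma \ref{lem: gc for seeds} controls deviations at level $\epsilon(N,\delta)$, whereas the inclusion above needs level $\gamma_k/2$. Because $\epsilon_k(N,\delta)=o(1)$ in $N$ for fixed $\delta$, I will choose $N_0 \ge \max\{N_{k_{i^*}}, N_k\}$ large enough that $\max\{\epsilon_{k_{i^*}}(N,\delta), \epsilon_k(N,\delta)\} < \gamma_k/2$ for all $N \ge N_0$. Then $\{|\cdot| \ge \gamma_k/2\} \subseteq \{|\cdot| > \epsilon(N,\delta)\}$, and the bound follows. The care needed at this step is that $N_0$ depends on $\gamma_k$ and on $\delta$, which is acceptable since the statement fixes $k$ and $\delta$ before quantifying over $N_0$. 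If a uniform $N_0$ over $k$ is desired, one can take the maximum over the finitely many indices.
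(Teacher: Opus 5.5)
Your proposal is correct and follows essentially the same route as the paper's proof. It obtains a strictly positive population gap $\Delta^2 R_{k_{i^*}}-\Delta^2 R_k$ from Lemma \ref{lem:seeds optimality for risks}, applied (as the paper also does) to the subsampled class. It then combines two invocations of Lemma \ref{lem: gc for seeds} with a union bound to give $24\delta$, and takes $N$ large enough that the $o(1)$ thresholds fall below the gap. The differences are cosmetic: you split the gap into halves, whereas the paper requires $\epsilon_k(N,\delta)+\epsilon_{k_{i^*}}(N,\delta)\le\Delta^2R_{k_{i^*}}-\Delta^2R_k$ and then applies the triangle inequality. Your remark that Lemma \ref{lem: gc for seeds} must be rerun with the non-consecutive neighbours $k_{i\pm1}$ makes explicit a step the paper leaves implicit.
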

\begin{proof}
	Observe that
	\begin{align}\label{eq:delta2 R hat critical}
		\mathbb{P}\left[ \Delta^2 \hat{R}_{k_{i^*}} <  \Delta^2 \hat{R}_{k}\right] =\mathbb{P}\Big[ \Delta^2 \hat{R}_{k_{i^*}} - \Delta^2R_{k_{i^*}} +\Delta^2R_{k_{i^*}} - \Delta^2R_{k}  <  \Delta^2 \hat{R}_{k} - \Delta^2{R}_{k}\Big].
	\end{align}
As previously established in Lemma \ref{lem:seeds optimality for risks}, the quantity $k_{i^*}$ is the unique maximizer of $\Delta^2R_k$, it follows that 
\begin{equation}
	\Delta^2R_{k_{i^*}} - \Delta^2R_k > 0,
\end{equation} 
for every $k \in \{k_i\}_{i=1}^{s+1}\setminus \{k_{i^*}\}$.
Furthermore, Lemma \ref{lem: gc for seeds} implies that there exists and $N_1$ such that for every $N\ge N_1$ and every $\delta \in (0,\frac{1}{24}]$, the following holds
\begin{align}\label{eq:2delta term}
	\mathbb{P}\Big[\left|\Delta^2 \hat{R}_k - \Delta^2 R_k\right| + \left|\Delta^2 \hat{R}_{k_{i^*}} - \Delta^2 R_{k_{i^*}}\right| >  \epsilon_k(N,\delta) + \epsilon_{k_{i^*}}(N,\delta)\Big] \le 24\delta.
\end{align}
Since $\epsilon_k(N,\delta) \rightarrow 0$ as $N\rightarrow \infty$, there exists a positive integer $N_2$ such that for every $N \ge N2$,
\begin{align}\label{eq:epsilon below constant}
	\epsilon_k(N,\delta) + \epsilon_{k_{i^*}}(N,\delta) \le \Delta^2R_{k_{i^*}} - \Delta^2R_{k}.
\end{align}
Consequently,
\begin{align}\label{eq:delta bnd}
	\mathbb{P}\Big[&\left|  \Delta^2 \hat{R}_k -\Delta^2 R_k\right| + \left| \Delta^2 R_{k_{i^*}} - \Delta^2 \hat{R}_{k_{i^*}} \right|>  \Delta^2R_{k_{i^*}} - \Delta^2R_{k}\Big]\le 24\delta.
\end{align}
Applying the triangle inequality to \eqref{eq:delta bnd}, and simplifying the resulting expression yields
\begin{equation}
	\mathbb{P}\left[ \Delta^2 \hat{R}_{k_{i^*}} < \Delta^2\hat{R}_k \right] \le 24 \delta.
\end{equation}
This completes the proof.
\end{proof}
\begin{proposition}\label{prop:sl loc garantee}
	Let
	\begin{equation}\label{eq:cpd opt first layer}
		k_{\hat{i}} = \operatorname{argmax}_{k\in \{k_i\}_{i=2}^{s}} \Delta^2 \hat{R}_{k}.
	\end{equation}
Then, under the assumptions of Theorem \ref{thm:pac lower bound}, for every  $\delta \in \left(0, \frac{1}{24(s - 2)}\right]$, there exists a positive integer $N_0$ such that for all $N\ge N_0$,
\begin{align}\label{eq:seed sl lb}
	\mathbb{P}\left[k^* \in \left(k_{\hat{i}-1}, k_{\hat{i}}\right]\right] \ge 1 - 24(s - 2) \delta.
\end{align}
\end{proposition}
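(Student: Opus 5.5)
We reduce the event $k^* \in (k_{\hat{i}-1}, k_{\hat{i}}]$ to the event that the empirical SEEDS maximizer over the grid coincides with the population one, and then use a union bound over the competing grid points via Proposition \ref{prop:seeds tail}.

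\textbf{Step 1 (population localization).} By Proposition \ref{prp:nesting for sorted subsets}, the grid class $\{\mathcal{G}_{k_i}\}_{i=1}^{s+1}$ is sequentially nested, so Theorem \ref{thm:elbow risk} gives a most parsimonious grid index $k_{i^*}$. Since the grid risks are a subsequence of $\{R[\boldsymbol{g}^*_k]\}_{k=1}^K$, which are strictly above $R^*$ for $k<k^*$ and equal to $R^*$ for $k \ge k^*$, the index $k_{i^*}$ is the smallest grid point with $k_{i^*} \ge k^*$. Hence $k_{i^*-1} < k^* \le k_{i^*}$, that is, $k^* \in (k_{i^*-1}, k_{i^*}]$. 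The event $\{k_{\hat{i}} = k_{i^*}\}$ is therefore contained in the target event.

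\textbf{Step 2 (population maximizer).} Lemma \ref{lem:seeds optimality for risks}, applied to the subsampled nested class with the grid differences $\Delta R_{k_i}$ and $\Delta^2 R_{k_i}$, shows that $k_{i^*}$ is the unique maximizer of $\Delta^2 R_{k_i}$ over the admissible grid points $\{k_i\}_{i=2}^{s}$. This relies on the midpoint concavity assumption transferring to the grid. I will argue this by noting that the grid SEEDS entries equal telescoped sums of the original $\Delta R_k$, so that concavity is inherited. This transfer is the delicate point, and I expect it to be the main obstacle; if it fails, I would state it as the implicit hypothesis under which Proposition \ref{prop:seeds tail} is already invoked.

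\textbf{Step 3 (union bound).} The complement satisfies
\[
\{k_{\hat{i}} \ne k_{i^*}\} \subseteq \bigcup_{k \in \{k_i\}_{i=2}^{s}\setminus\{k_{i^*}\}} \left\{\Delta^2\hat{R}_{k_{i^*}} < \Delta^2 \hat{R}_k\right\},
\]
with ties assumed to be broken in favour of $k_{i^*}$, or treated as a null event. The union runs over at most $s-2$ competitors, once $k_{i^*}$ and the boundary points excluded by \eqref{eq:cpd opt first layer} are removed. For each competitor $k$, Proposition \ref{prop:seeds tail} supplies an $N_0(k)$ such that the probability of the corresponding event is at most $24\delta$ whenever $\delta \le \frac{1}{24}$. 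Set $N_0 := \max_k N_0(k)$; this is a finite maximum. Summing the bounds gives a failure probability of at most $24(s-2)\delta$. The restriction $\delta \le \frac{1}{24(s-2)}$ simply keeps this bound meaningful, i.e.\ at most one, and is compatible with the condition $\delta\le \frac{1}{24}$ required by Proposition \ref{prop:seeds tail}. Taking complements yields \eqref{eq:seed sl lb}.

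The main obstacle besides Step 2 is counting the competitors carefully so that there are exactly $s-2$ of them. This requires that $k_{i^*}$ lies in the interior grid $\{k_i\}_{i=2}^{s}$, which follows from the standing assumption that $k^*$ lies strictly inside the search interval.
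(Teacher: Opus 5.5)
Your Steps 1 and 3 are essentially the paper's proof. The target event is identified with $k_{i^*}$ winning the empirical argmax over $\{k_i\}_{i=2}^{s}$, a union bound is taken over the $s-2$ competitors, and Proposition~\ref{prop:seeds tail} is applied to each term. Your containment argument, the tie-breaking remark and the uniform $N_0:=\max_k N_0(k)$ are, if anything, more careful than the paper's ``by definition'' equality.

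Step 2 is not needed here. The paper's proof simply takes ``$k_{i^*}$ is the unique maximizer of the grid-level $\Delta^2 R$'' as already contained in Proposition~\ref{prop:seeds tail}. However, the telescoping argument you sketch for it would fail. The grid increment $R[\boldsymbol{g}^*_{k_{i^*-1}}]-R[\boldsymbol{g}^*_{k_{i^*}}]=\sum_{k=k_{i^*-1}}^{k_{i^*}-1}\Delta R_k$ is truncated at $k^*$, since every term with $k\ge k^*$ vanishes. The preceding grid increment, by contrast, sums a full block, so concavity is not inherited.

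A concrete failure: take equal blocks of length $L\ge 3$, $k^*=k_{i^*-1}+1$, and $\Delta R_k\approx c>0$ strictly concave but nearly constant on $[k^*-1]$. The grid second difference is then about $(L-1)c$ at $k_{i^*-1}$ but only about $c$ at $k_{i^*}$. So the population grid maximizer is $k_{i^*-1}$, and the localization probability tends to $0$ as $N\to\infty$.

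Your fallback is therefore exactly where the paper's own proof stands: you treat grid-level uniqueness of the maximizer at $k_{i^*}$ as the hypothesis under which Proposition~\ref{prop:seeds tail} is invoked. Be explicit that this is a genuine additional assumption. It does not follow from the midpoint-concavity hypothesis of Theorem~\ref{thm:pac lower bound}.
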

\begin{proof}
	By definition of $k_{\hat{i}}$,
	\begin{align}\label{eq: P k* in segment range}
		\mathbb{P}\left[k^* \in \left(k_{\hat{i}-1}, k_{\hat{i}}\right]\right] &=\mathbb{P}\left[\wedge_{k\in \{k_i\}_{i=2}^{s}\setminus \{k_{i^*}\}} \big(\Delta^2 \hat{R}_{k_{i^*}} \ge  \Delta^2 \hat{R}_{k}\big)\right] \notag\\
		&= 1 - \mathbb{P}\left[\vee_{k\in \{k_i\}_{i=2}^{s}\setminus \{k_{i^*}\}} \big(\Delta^2 \hat{R}_{k_{i^*}} <  \Delta^2 \hat{R}_{k}\big)\right].
	\end{align}
Applying the union bound gives
\begin{align}\label{eq:raw lower bound}
	\mathbb{P}\left[k^* \in \left(k_{\hat{i}-1}, k_{\hat{i}}\right]\right] \ge
	1 - 
	\sum_{k\in \{k_i\}_{i=2}^{s}\setminus \{k_{i^*}\}}  \mathbb{P}\left[ \Delta^2 \hat{R}_{k_{i^*}} <  \Delta^2 \hat{R}_{k}\right].
\end{align}
Applying Proposition \ref{prop:seeds tail} to the right-hand side of \eqref{eq:raw lower bound} yields
\begin{align}
	\mathbb{P}\left[k^* \in \left(k_{\hat{i}-1}, k_{\hat{i}}\right]\right] \ge 1 - 24(s - 2) \delta.
\end{align}
This completes the proof.
\end{proof}
Proposition \ref{prop:sl loc garantee} establishes the desired localization guarantee for a single depth of feature selection.
Now consider the multi-depth SEED-MAX procedure. Let define the events $\{E_j\}_{j\in [t]}$, where $t$ denotes the maximum depth number, as follows
\begin{equation}\label{eq:ej}
	E_j := \left\{k^* \in \left(k_{\hat{i}-1}^{(j)}, k_{\hat{i}}^{(j)}\right] \right\},
\end{equation}
where $\left(k_{\hat{i}-1}^{(j)}, k_{\hat{i}}^{(j)}\right]$ is the returned search interval by Algorithm \ref{alg:cpd} at the intermediate depth $j$. 
It is evident from Algorithm \ref{alg:cpd} that the sequence $\{E_j\}_{j=1}^{t}$ forms a Markov chain.
Conditioning on the interval obtained at depth $j-1$, Proposition \ref{prop:sl loc garantee} implies that for every $j\in [t]$,
\begin{equation}
	\mathbb{P}\left[ E_j \middle| E_{j-1} \right] \ge 1 - 24 (s-2) \delta,
\end{equation}
 where $E_0 :=\left\{ k^* \in [K]\right\}$ which is always true.
 Applying the chain rule of probability yields
 \begin{equation}\label{eq:et}
 	\mathbb{P}\left[ E_t \right] =  \prod_{j=1}^{t} \mathbb{P}\left[ E_j \middle| E_{j-1} \right].
 \end{equation}
 Applying Proposition \ref{prop:sl loc garantee} to the right-hand side of \eqref{eq:et} yields
 \begin{equation}
 		\mathbb{P}\left[ E_t \right]\ge \left( 1 - 24 (s-2)\delta \right)^t.
 \end{equation}
Finally, Bernoulli's inequality gives
\begin{equation}
	\left( 1 - 24 (s - 2) \delta \right)^t \ge 1 - 24 t (s-2) \delta. 
\end{equation}
Hence,
\begin{equation}
	\mathbb{P}\left[ k^* \in \left(k_{\hat{i}-1}^{(t)},k_{\hat{i}}^{(t)}\right] \right] \ge 1 - 24 t (s-2) \delta.
\end{equation}
This establishes the claim of Theorem \ref{thm:pac lower bound}.

\hfill \qed

%\section{Example Appendix Section}
%\label{app1}

%Appendix text.

%% For citations use: 
%%       \cite{<label>} ==> [1]

%%
%Example citation, See \cite{lamport94}.

%% If you have bib database file and want bibtex to generate the
%% bibitems, please use
%%
%%  \bibliographystyle{elsarticle-num} 
%%  \bibliography{<your bibdatabase>}
%\singlespacing
\bibliographystyle{elsarticle-num} 
\bibliography{bibtex/vswrm_bib}
%% else use the following coding to input the bibitems directly in the
%% TeX file.

%% Refer following link for more details about bibliography and citations.
%% https://en.wikibooks.org/wiki/LaTeX/Bibliography_Management

%\begin{thebibliography}{00}

%% For numbered reference style
%% \bibitem{label}
%% Text of bibliographic item

%\bibitem{lamport94}
 % Leslie Lamport,
  %\textit{\LaTeX: a document preparation system},
 % Addison Wesley, Massachusetts,
  %2nd edition,
  %1994.

%\end{thebibliography}
\end{document}